\newtheorem{thm}{Theorem}
\newtheorem{lem}{Lemma}
\newtheorem{defn}{Definition}
\newcommand{\kareem}[1]{\textcolor{blue}{KA: \emph{#1}}}
\newcommand{\citep}[1]{\cite{#1}}
\newcommand{\citet}[1]{\cite{#1}}
\newcommand{\citey}[1]{(\citeyear{#1})}
\begin{document}

\title{Towards Resolving Unidentifiability in Inverse Reinforcement Learning}

\numberofauthors{3}

\author{ Kareem Amin \\ University of Michigan \\ amkareem@umich.edu  \alignauthor Satinder Singh \\ University of Michigan \\ baveja@umich.edu }

\maketitle

\begin{abstract}
We consider a setting for Inverse Reinforcement Learning (IRL) where the learner is extended with the ability to \emph{actively} select multiple environments, observing an agent's behavior on each environment. We first demonstrate that if the learner can experiment with \emph{any} transition dynamic on some fixed set of states and actions, then there exists an algorithm that reconstructs the agent's reward function to the fullest extent theoretically possible, and that requires only a small (logarithmic) number of experiments. We contrast this result to what is known about IRL in single fixed environments, namely that the true reward function is fundamentally unidentifiable. We then extend this setting to the more realistic case where the learner may not select any transition dynamic, but rather is restricted to some fixed set of environments that it may try. We connect the problem of maximizing the information derived from experiments to active submodular function maximization, and demonstrate that a greedy algorithm is near optimal (up to logarithmic factors). Finally, we empirically validate our algorithm on an environment inspired by behavioral psychology.  
\end{abstract}

\section{Introduction}

Inverse reinforcement learning (IRL), first introduced by Ng and Russell \citey{ng2000algorithms}, is concerned with the problem of inferring the (unknown) reward function of an agent behaving optimally in a Markov decision process. The most basic formulation of the problem asks: given a known environment\footnote{We will use the terminology environment to refer to an MDP without a reward function.} $E$, and an optimal agent policy $\pi$, can we deduce the reward function $R$ which makes $\pi$ optimal for the MDP $(E,R)$? 

IRL has seen a number of applications in the development of autonomous systems, such as autonomous vehicle operation, where even a cooperative (human) agent might have great difficultly describing her incentives \cite{smart2002effective,abbeel2004apprenticeship,abbeel2007application,coates2009apprenticeship}. However, the problem is fundamental to almost any study which involves behavioral modeling. Consider an experimental psychologist attempting to understand the internal motivations of a subject, say a mouse, or consider a marketer observing user behavior on a website, hoping to understand the potential consumer's value for various offers. 

As noted by Ng and Russell, a fundamental complication to the goals of IRL is the impossibility of identifying the exact reward function of the agent from its behavior. In general, there may be infinitely many reward functions consistent with any observed policy $\pi$ in some fixed environment. Since the true reward function is fundamentally unidentifiable, much of the previous work in IRL has been concerned with the development of heuristics which prefer certain rewards as better explanations for behavior than others \cite{ng2000algorithms,ziebart2008maximum,ramachandran2007bayesian}.
In contrast, we make several major contributions towards directly resolving the issue of unidentifiability in IRL in this paper. 

As a first contribution, we separate the causes of this unidentifiability into three classes. 1) A trivial reward function, assigning constant reward to all state-action pairs, makes all behaviors optimal; the agent with constant reward can execute any policy, including the observed $\pi$. 2) Any reward function is behaviorally invariant under certain arithmetic operations, such as re-scaling. Finally, 3) the behavior expressed by some observed policy $\pi$ may not be sufficient to distinguish between two possible reward functions both of which \emph{rationalize the observed behavior}, i.e., the observed behavior could be optimal under both reward functions. We will refer to the first two cases of unidentifiability as \emph{representational unidentifiability}, and the third as \emph{experimental unidentifiability}. 

As a second contribution, we will demonstrate that, while representational unidentifiability is unavoidable, experimental unidentifiability is not. In contrast to previous methods, we will demonstrate how the latter can be eliminated completely in some cases. Moreover, in a manner which we will make more precise in Section \ref{sec:identification}, we will argue that in some ways representational unidentifiability is superficial; by eliminating experimental unidentifiability, one arrives at the fullest possible  characterization of an agent's reward function that one can hope for.

As a third contribution, we develop a slightly richer model for IRL. We will suppose that the learner can observe the agent behaving optimally in \emph{a number of environments of the learner's choosing}. Notice that in many of our motivating examples it is reasonable to assume that the learner does indeed have this power. One can ask the operator of a vehicle to drive through multiple terrains, while the experimental psychologist might observe a mouse across a number of environments. It is up to the experimenter to organize the dynamics of the maze. One of our key results will be that, with the right choice of environments, the learner can eliminate experimental unidentifiability. We will study our {\bf repeated experimentation for IRL} in two settings, one in which the learner is omnipotent in that there are no restrictions on what environments can be presented to the agent, and another in which there are restrictions on the type of environments the learner can present. We show that in the former case, experimental unidentifiability can be eliminated with just a small number of environments. In the latter case, we cast the problem as budgeted exploration, and show that for some number of environments $B$, a simple greedy algorithm approximately maximizes the information revealed about $R$ in $B$ environments.

\paragraph{Most Closely Related Work}

Prior work in IRL has mostly focused on inferring an agent's reward function from data acquired from a fixed environment \cite{ng2000algorithms,abbeel2004apprenticeship,coates2008learning,ziebart2008maximum,ramachandran2007bayesian,syed2007game,regan2010robust}. We consider a setting in which the learner can actively select multiple environments to explore, before using the observations obtained from these environments to infer an agent's reward. Studying a model where the agent can make active selections of environments in an IRL setting is novel to the best of our knowledge. Previous applications of active learning to IRL have considered settings where, \emph{in a single environment}, the learner can query the agent for its action in some state \cite{lopes2009active}, or for information about its reward \cite{regan2009regret}. 

There is prior work on using data collected from multiple --- but exogenously fixed --- environments to predict agent behavior \cite{ratliff2006maximum}. There are also applications where methods for single-environment MDPs have been adapted to multiple environments \cite{ziebart2008maximum}. Nevertheless, both these works do not attempt to resolve the ambiguity inherent in recovering the true reward in IRL, and describe IRL as being an ``ill-posed'' problem. As a result these works ultimately consider the objective of mimicking or predicting an agent's optimal behavior. While this is a perfectly reasonable objective, we will more be interested in settings where the identification of $R$ is the goal in itself. Among many other reasons, this may be because the learner explicitly desires an interpretable model of the agent's behavior, or because the learner desires to transfer the learned reward function to new settings. 

In the economics literature, the problem of inferring an agent's utility from behavior has long been studied under the heading of utility or preference elicitation \cite{chajewska2000making,von2007theory, regan2011eliciting,rothkopf2011preference,regan2009regret,regan2011eliciting}. When these models analyze Markovian environments, they will assume a fixed environment where the learner can ask certain types of queries, such as bound queries eliciting whether some state-action reward $r(s,a) \geq b$. We will instead be interested in cases where the learner can only make inferences from agent behavior (with no external source of information), but can manipulate the environments on which the agent acts. 

\section{Setting and Preliminaries}\label{sec:prelim}

We denote an environment by a tuple $E = (S,A,P,\gamma)$, where $S = \{1,...,d\}$ is a finite set of states in which the agent can find itself, $A$ is a finite set of actions available to the agent, and $P$ is a collection of transition dynamics for each $a \in A$, so that $P = \{P_a\}_{a \in A}$. We represent each $P_a$ as a row-stochastic matrix, with $P_a \in \mathbb{R}^{d \times d}$, and $P_a(s,s')$ denoting the agent's probability of transitioning to state $s'$ from state $s$ when selecting action $a$. The agent's discount factor is $\gamma \in (0,1)$.

We represent an agent's reward function as a vector $R \in \mathbb{R}^d$ with $R(s)$ indicating the (undiscounted) payout for arriving at state $s$. Note that a joint choice of Markovian environment $E$ with reward function $R$ fixes an MDP $M = (E,R)$. A policy is a mapping $\pi : S \rightarrow A$. With slight abuse of notation, we can represent $\pi$ as a matrix $P_{\pi}$ where $P_{\pi}(s,\cdot) = P_{\pi(s)}(s,\cdot)$ (we take the $s$-row of $P_{\pi}$ to be the $s$-row of $P_a$, where $a$ is the action chosen in state $s$).

Let $\mathrm{OPT}(E,R)$ denote the set of policies that are optimal, maximizing the agent's expected time-discounted rewards, for the MDP $(E,R)$. We consider a {\bf repeated experimentation} setting, where we suppose that the learner is able to select a sequence of environments\footnote{Defined on the same state and action spaces.} $E^1,E^2,...$, sequentially observing $\pi^1,\pi^2,...$ satisfying $\pi^i \in \mathrm{OPT}(E^i, R)$, for some unknown agent reward function $R$. We call each $(E^i, \pi^i)$ an \emph{experiment}. The goal of the experimenter is to output a reward estimate $\hat{R}$, approximating the true reward function. In many settings, the assumption that the learner can directly observe the agent's full policy $\pi^i$ is too strong, and a more realistic assumption is the learner observes only  \emph{trajectories} $T^i$, where $T^i$ denotes a sequence of state-action, pairs drawn according to the distribution induced by the agent playing policy $\pi^i$ in environment $E^i$. We will refer to the former feedback model as the {\bf policy observation setting}, and the latter as the {\bf trajectory observation setting}.

A fundamental theorem for IRL follows from rewriting the Bellman equations associated with the optimal policy in a single MDP, noting that the components of the vector $P_a (I - \gamma P_{\pi})^{-1}R$ correspond to the Q-value for action $a$, under policy $\pi$ and reward $R$, for each of $d$ states. 

\begin{thm}[Ng, Russell \cite{ng2000algorithms}]\label{thm:ngrussel}
Let $E = (S, A, P, \gamma)$ be an arbitrary environment, and $R \in \mathbb{R}^d$. $\pi \in \mathrm{OPT}(E,R)$ if and only if $\forall a \in A$, $(P_\pi - P_a)(I - \gamma P_{\pi})^{-1}R \geq 0$.\footnote{The inequality is read component-wise. That is, the relation holds if standard $\geq$ holds for each component.}
\end{thm}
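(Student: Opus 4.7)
The plan is to connect the stated matrix inequality to the classical Bellman optimality condition for $\pi$, with the forward direction reducing to algebra and the reverse direction requiring a policy-improvement/contraction argument. As a preliminary, I would verify that $(I - \gamma P_\pi)$ is invertible: since $P_\pi$ is row-stochastic, $\|\gamma P_\pi\|_\infty \leq \gamma < 1$, so the Neumann series $\sum_{k \geq 0} (\gamma P_\pi)^k$ converges and equals $(I - \gamma P_\pi)^{-1}$. Using the Bellman evaluation equation under the paper's reward-on-arrival convention, one verifies that $\tilde{R} := (I - \gamma P_\pi)^{-1} R$ equals $R + \gamma V^\pi$, and therefore $P_a \tilde{R}$ is the vector of Q-values $Q^\pi(\cdot, a)$ whose $s$-th entry is the expected discounted return of taking action $a$ in $s$ and following $\pi$ thereafter. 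In particular, $V^\pi = P_\pi \tilde{R}$.

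For the forward direction, if $\pi \in \mathrm{OPT}(E, R)$ then Bellman optimality gives $Q^\pi(s, \pi(s)) \geq Q^\pi(s, a)$ for every $(s, a)$. Rewriting this using the identifications above yields exactly the componentwise inequality $(P_\pi - P_a)(I - \gamma P_\pi)^{-1} R \geq 0$, with no further work required.

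For the reverse direction, fix an arbitrary policy $\pi'$. Applying the hypothesis row by row with $a = \pi'(s)$ gives $P_{\pi'} \tilde{R} \leq P_\pi \tilde{R} = V^\pi$. Consider the Bellman evaluation operator $T^{\pi'} x := P_{\pi'}(R + \gamma x)$; it is monotone (since $P_{\pi'}$ has nonnegative entries) and a $\gamma$-contraction in $\|\cdot\|_\infty$, with unique fixed point $V^{\pi'}$. The previous inequality rearranges to $T^{\pi'} V^\pi = P_{\pi'} \tilde{R} \leq V^\pi$; by monotonicity, iterating $T^{\pi'}$ on $V^\pi$ yields a nonincreasing sequence $V^\pi \geq T^{\pi'} V^\pi \geq (T^{\pi'})^2 V^\pi \geq \cdots$ whose limit (by contraction) is $V^{\pi'}$, so $V^{\pi'} \leq V^\pi$ componentwise. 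Since $\pi'$ was arbitrary, $\pi$ is optimal.

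The main obstacle is the reverse direction, which requires upgrading a one-step dominance condition into a global dominance over all competing policies. The natural tool is the monotone contraction property of the policy-specific Bellman operator; while standard in the MDP literature, this step is the only nontrivial one, and it is what makes the one-step inequality in the theorem statement strong enough to certify full optimality rather than merely local improvement.
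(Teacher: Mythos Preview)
Your proof is correct. Note, however, that the paper does not actually prove this theorem: it is quoted from Ng and Russell and accompanied only by the one-sentence remark that ``the components of the vector $P_a (I - \gamma P_{\pi})^{-1}R$ correspond to the Q-value for action $a$, under policy $\pi$ and reward $R$.'' Your forward direction is exactly a fleshed-out version of that remark (identifying $(I-\gamma P_\pi)^{-1}R = R + \gamma V^\pi$ and hence $P_a$ times it as $Q^\pi(\cdot,a)$), and your reverse direction --- the contraction/monotonicity argument upgrading one-step dominance to global optimality --- supplies the part the paper omits entirely. So there is no meaningful divergence in approach; you have simply written out the standard proof that the paper gestures at.
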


The key take-away from this theorem is that in a policy observation setting, the set of reward functions $R$ consistent with some observed optimal policy $\pi$ are precisely those satisfying some set of linear constraints. Furthermore, those constraints can be computed from the environment $E$ and policy $\pi$. Thus, an object that we will make recurring reference to is the set of reward functions consistent with experiment $(E, \pi)$, denoted $K(E,\pi)$:

\begin{align*}
K(E, \pi) = \{ R \in \mathbb{R}^d \mid &\forall a \in A, (P_\pi - P_a)(I - \gamma P_{\pi})^{-1}R \geq 0 \\
&\forall s \in \mathcal{S}, R_{\min} \leq R(s) \leq R_{\max}\}.
\end{align*}

Since $K(E,\pi)$ is an intersection of linear constraints, it defines a convex polytope, a fact which will be of later algorithmic importance. An immediate corollary of Theorem \ref{thm:ngrussel}, is that given a sequence of experiments $\mathcal{E} = ((E^1,\pi^1), ..., (E^n, \pi^n))$, the set of rewards consistent with $\mathcal{E}$ are precisely those in 

$$K(\mathcal{E}) \triangleq \cap_{(E, \pi) \in \mathcal{E}} K(E, \pi)$$ 

We can also think of a trajectory $T$ as inducing a \emph{partial} policy $\pi_T$ on the states visited by the trajectory. In particular, let $\mathcal{D}(T)$ denote the domain of $T$, $\mathcal{D}(T) = \{s \mid \exists (s, a) \in T\}$. We say two policies $\pi,\pi'$ are consistent on $\mathcal{D}(T) \subset S$, denoted $\pi \equiv_{\mathcal{D}(T)} \pi'$, iff $\pi(s) = \pi'(s)$ for all $s \in \mathcal{D}(T)$. Thus, given $(E,T)$, the set of rewards consistent with the observation are precisely $K(E,T) = \{R \in \mathbb{R}^d \mid \exists \pi \equiv_{\mathcal{D}(T)} \pi_T, \forall a \in A, (P_\pi - P_a)(I - \gamma P_{\pi})^{-1}R \geq 0, R_{\min} \leq R_i \leq R_{\max}\}$, and given a sequence $\mathcal{E} = \{(E^1, T^1),...,(E^n,T^n)\}$, we can define $K(\mathcal{E})$ in the trajectory setting. 

\section{On Identification}\label{sec:identification} 

In this section we will give a more nuanced characterization of what it means to identify a reward function. We will argue that there are multiple types of uncertainty involved in identifying $R$, which we categorize as \emph{representational unidentifiability} and \emph{experimental unidentifiability}. Furthermore, we argue that first type is in some ways superficial, and ought to be ignored, while the second type can be eliminated.

We begin with a definition. Let $R$ and $R'$ be reward functions defined on the same state space $\mathcal{S}$. We say that $R$ and $R'$ are \emph{behaviorally equivalent} if for \emph{any} environment (also defined on $\mathcal{S}$), the agent whose reward function is $R$ behaves identically to the agent whose reward function is $R'$. 

\begin{defn}\label{defn:equiv}
Two reward vectors $R,R' \in \mathbb{R}^d$ defined on $\mathcal{S}$ are \emph{behaviorally equivalent}, denoted $R \equiv R'$ if for any set of actions, transition dynamics, and discount, $(\mathcal{A},\mathcal{P},\gamma)$, defining an environment $E = (\mathcal{S},\mathcal{A},\mathcal{P},\gamma)$ we have that \\
$\mathrm{OPT}(E, R) = \mathrm{OPT}(E,R')$.
\end{defn}

Behavioral equivalence defines an equivalence relation over vectors in $\mathbb{R}^d$, and we let $[R] = \{R' \in \mathbb{R}^d \mid R' \equiv R\}$ denote the equivalence classes defined in this manner. Intuitively, if $R$ and $R'$ are behaviorally equivalent, they induce identical optimal policies in every single environment, and therefore are not really ``different'' reward functions. They are simply different representations of the same incentives. 

We now observe that behavioral equivalence classes are invariant under multiplicative scaling by positive scalars, and component-wise translation by a constant. Intuitively, this is easy to see. Adding $c$ reward to every state in some reward function $R$ does not affect an agent's decision-making. This is simply ``background'' reward that the agent gets for free. Similarly, scaling $R$ by a positive constant simply changes the ``units" used to represent rewards.  The agent does not, and should not, care whether its reward is represented in dollars or cents. We prove this formally in the following Theorem.

\begin{thm}\label{thm:equiv}
For any $c \in \mathbb{R}^d$, let $\vec{c} \in \mathbb{R}^d$ denote the vector with all components equal to $c$. For any $\alpha > 0$, and $R \in \mathbb{R}^d$, $R \equiv \alpha R + \vec{c}$. 
\end{thm}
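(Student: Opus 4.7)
The plan is to reduce everything to the Ng--Russell characterization (Theorem~\ref{thm:ngrussel}) and show that the linear constraint system defining optimality is invariant under the transformation $R \mapsto \alpha R + \vec{c}$ for any fixed environment $E$ and any candidate policy $\pi$. Concretely, I will fix an arbitrary environment $E = (\mathcal{S},\mathcal{A},\mathcal{P},\gamma)$ and an arbitrary policy $\pi$, and argue that for every action $a \in \mathcal{A}$,
\[
(P_\pi - P_a)(I - \gamma P_\pi)^{-1}(\alpha R + \vec{c}) \;\geq\; 0
\quad \Longleftrightarrow \quad
(P_\pi - P_a)(I - \gamma P_\pi)^{-1} R \;\geq\; 0 .
\]
Since this equivalence holds for every $a$ simultaneously, Theorem~\ref{thm:ngrussel} then gives $\pi \in \mathrm{OPT}(E,R)$ iff $\pi \in \mathrm{OPT}(E, \alpha R + \vec{c})$, and since $E$ was arbitrary this establishes $R \equiv \alpha R + \vec{c}$ per Definition~\ref{defn:equiv}.

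The argument splits into a scaling piece and a translation piece. The scaling piece is immediate from linearity: $(P_\pi - P_a)(I - \gamma P_\pi)^{-1}(\alpha R) = \alpha (P_\pi - P_a)(I - \gamma P_\pi)^{-1} R$, and because $\alpha > 0$, multiplying a vector by $\alpha$ preserves the sign of each of its components, so the component-wise inequality is unaffected. The translation piece is where the slight calculation lies, and it hinges on the fact that $P_\pi$ and $P_a$ are row-stochastic: both satisfy $P_\pi \vec{1} = P_a \vec{1} = \vec{1}$, hence $P_\pi \vec{c} = P_a \vec{c} = \vec{c}$. Using the Neumann expansion $(I - \gamma P_\pi)^{-1} = \sum_{k \geq 0} \gamma^k P_\pi^k$ (which converges for $\gamma \in (0,1)$ because $P_\pi$ is stochastic), I get $(I - \gamma P_\pi)^{-1}\vec{c} = \tfrac{1}{1-\gamma}\vec{c}$, and therefore $(P_\pi - P_a)(I - \gamma P_\pi)^{-1} \vec{c} = \tfrac{1}{1-\gamma}(P_\pi - P_a)\vec{c} = 0$.

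Combining the two pieces yields
\[
(P_\pi - P_a)(I - \gamma P_\pi)^{-1}(\alpha R + \vec{c}) = \alpha\,(P_\pi - P_a)(I - \gamma P_\pi)^{-1} R,
\]
and positivity of $\alpha$ finishes the claimed equivalence of inequalities. I do not expect any real obstacle here; the only technical point worth being careful about is invoking the Neumann series, which is standard but should be justified by noting that $\|\gamma P_\pi\| < 1$ in an appropriate operator norm (e.g., the induced $\infty$-norm, since $P_\pi$ is stochastic and $\gamma < 1$), so $I - \gamma P_\pi$ is invertible and the series converges. Everything else reduces to two one-line observations: scalars pull through, and the all-ones eigenvector of stochastic matrices kills the $\vec{c}$ term.
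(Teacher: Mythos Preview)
Your proposal is correct and follows essentially the same approach as the paper: reduce to the Ng--Russell characterization, handle the positive scaling by linearity, and show that the translation term $\vec{c}$ is annihilated by $(P_\pi - P_a)(I-\gamma P_\pi)^{-1}$. The only cosmetic difference is that you compute $(I-\gamma P_\pi)^{-1}\vec{c} = \tfrac{1}{1-\gamma}\vec{c}$ directly via the Neumann series, whereas the paper uses the resolvent identity $(I-\gamma P_\pi)^{-1} = I + \gamma(I-\gamma P_\pi)^{-1}P_\pi$ to derive a fixed-point equation $v = \gamma v$ and conclude $v=0$; both routes are equally valid one-liners.
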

\begin{proof}
First consider $\vec{c}$ as defined in the statement of the Theorem. Fix any environment $E = (\mathcal{S},\mathcal{A}, \mathcal{P}, \gamma)$, action $a \in \mathcal{A}$ and arbitrary policy $\pi$. We begin by claiming that $(P_\pi - P_a)(I - \gamma P_{\pi})^{-1}\vec{c} = \vec{0}$. 

The Woodbury formula for matrix inversion tells us that $(I - \gamma P_{\pi})^{-1} = I + (I - \gamma P_{\pi})^{-1} \gamma P_{\pi}$. Furthermore, for any row-stochastic matrix $P$, $P\vec{c} = \vec{c}$. Therefore:

\begin{align*}
v &= (P_\pi - P_a)(I - \gamma P_{\pi})^{-1}\vec{c}\\ &= (P_\pi - P_a)(I + (I - \gamma P_{\pi})^{-1}\gamma P_{\pi})\vec{c}\\
&= (P_\pi - P_a)\vec{c} + (P_{\pi} - P_a)(I - \gamma P_{\pi})^{-1}\gamma P_{\pi}\vec{c}\\
&= \vec{0} + (P_{\pi} - P_a)(I - \gamma P_{\pi})^{-1}\gamma \vec{c} = \gamma v
\end{align*}

Since $\gamma \in (0,1)$, it must be that $v = \vec{0}$. 

Now fix a reward function $R \in \mathbb{R}^d$, and arbitrary environment $E$, and consider $\mathrm{OPT}(E,R)$. By Theorem \ref{thm:ngrussel}, we know that $\pi \in \mathrm{OPT}(E,R)$ iff for any $a \in A$, $(P_\pi - P_a)(I - \gamma P_{\pi})^{-1}R \geq 0$, which occurs iff  $(P_\pi - P_a)(I - \gamma P_{\pi})^{-1}( \alpha R) \geq 0$, since $\alpha$ is a positive scalar. Finally, we can conclude that $\pi \in \mathrm{OPT}(E,R)$ iff for all $a \in \mathcal{A}$, $(P_\pi - P_a)(I - \gamma P_{\pi})^{-1}( \alpha R + \vec{c}) \geq 0$, this last condition implying that $\pi \in \mathrm{OPT}(E,\alpha R + \vec{c})$, again by Theorem \ref{thm:ngrussel}. 

Since our choice of $E$ was arbitrary, by Definition \ref{defn:equiv}, $R \equiv \alpha R + \vec{c}$, concluding the proof.
\end{proof}

Thus, we argue that one reason why reward functions cannot be identified is a trivial one: the classic IRL problem does not fix a consistent representation for reward functions.  For any $R \in \mathbb{R}^d$ there are an uncountable number of other functions in $[R]$, namely $\alpha R + \vec{c}$ for any $\alpha$ and $\vec{c}$, all of which are behaviorally identical to $R$. However, distinguishing between these functions is irrelevant; whether an agent's ``true'' reward function is $(1,2,3,4)$ or $(0,1/3,2/3,1)$\footnote{We get $(0,1/3,2/3,1)$ from $(1,2,3,4)$ by subtracting $1$ from every state and dividing by $3$} is simply a matter of what units are used to represent rewards. 

In light of this observation, it is convenient to fix a \emph{canonical element} of each equivalence class $[R]$. For any constant reward function $R$, we will take its canonicalized representation to be $\vec{0}$. Otherwise we note, by way of Theorem \ref{thm:equiv}, that any $R$ can be translated and re-scaled so that $\max_s R(s) = 1$ and $\min_s R(s) = 0$. More carefully, for any non-constant $R$, we take its canonicalized representation to be $(R - \min_s R(s))/(\max_s R(s) - \min_s R(s)) \in [R]$. This canonicalization is consistent with behavioral equivalence, and we state the following Theorem whose proof can be found in the appendix. As a consequence of this Theorem, we can use the notation $[R]$ interchangeably to refer to the equivalence class of $R$, or the the unique canonical element of $[R]$.

\begin{thm}
For any $R,R' \in \mathbb{R}^d$, $R \equiv R'$ if and only if they have the same canonicalized representation. 
\end{thm}

We next consider the issue of trivial/constant rewards $[\vec{0}]$. Since the IRL problem was first formulated, it has been observed that no single experiment can ever determine that the agent's reward function is not a constant reward function. The algebraic reason for this is the fact that $\vec{0}$ is always a solution to the linear system $K(E,\pi)$, for any $E$ and $\pi$. The intuitive reason for this is the fact that any $\pi$ on some $E$ is as optimal as any other policy for an agent whose reward is $\vec{0}$. Therefore, if we consider an agent whose true reward is some $R \in \mathbb{R}^d, R \not= \vec{0}$, then \emph{even in the policy observation setting}, both $R, \vec{0} \in K(E, \pi)$.  Furthermore, this will not disappear with multiple experimentation. After any sequence of experiments $\mathcal{E}$, it also remains that both $R, \vec{0} \in K(\mathcal{E})$. 

Consider an agent whose true reward function is $\vec{0}$. A crucial consequence of the above is that if an IRL algorithm guarantees that it will identify $\vec{0}$, then it necessarily misidentifies non-trivial reward functions. This is because an agent with a trivial reward function is allowed to behave arbitrarily, and therefore may choose to behave consistently with some non-trivial reward $R$. An IRL algorithm that guarantees identification of trivial rewards will therefore misidentify the agent whose true reward is $R$. 

This leads us to the following revised definition of identification, which  accounts for what we call representational unidentifiability:

\begin{defn}\label{defn:ident}
We say that an IRL algorithm succeeds at identification if for any $R \in \mathbb{R}^d$, after observing behavior from an agent with true reward $R$, the algorithm outputs a $\hat{R}$ such that $\hat{R} \equiv R$ whenever $R \not\in [\vec{0}]$.
\end{defn}

Notice that this definition accomplishes two things. First, it excuses an algorithm for decisions about how $R$ is represented. In other words, it asserts that the salient task in IRL is computing a member of $[R]$, not the literal $R$. Secondly, if the true reward function $R$ is not constant (i.e. $R \not\in [\vec{0}]$), it demands the that algorithm identify $R$ (up to representational decisions). However, if the agent really does have a reward function of $\vec{0}$, the algorithm is allowed to output anything. In other words, the Algorithm is only allowed to behave arbitrarily if the agent behaves arbitrarily.\footnote{We comment that, as a practical matter, one is usually interested in rationalizing the behavior of an agent believed to be non-trivial.}
 
We also note that Definition \ref{defn:ident} can be relaxed to give a notion of approximate identification, which we state here:

\begin{defn}
We say that an IRL algorithm $\epsilon$-identifies a reward function if for any $R \in \mathbb{R}^d$, after observing behavior from an agent with true reward $R$, the algorithm outputs a $\hat{R}$ such that $||[R] - [\hat{R}]||_{\infty} \leq \epsilon$  whenever $R \not\in [\vec{0}]$. 
\end{defn}

Even Definition \ref{defn:ident} may not be attainable from a single experiment, as $K(E,\pi)$ may contain multiple behavioral classes $[R]$. We call this phenonmenon \emph{experimental unidentifiability}, due to the fact that the experiment $(E,\pi)$ may simply be insufficient to distinguish between some $[R]$ and $[R']$. In the next section, we will observe that this source of uncertainty in the reward function can be decreased with multiple experimentation, as depicted in Figure \ref{fig:cartoon} (see Caption for details). In other words, by distinguishing representational unidentifiability from experimental unidentifiability, we can formally resolve the latter.

\begin{figure}
\centering
    \begin{subfigure}[b]{0.12\textwidth}
  \includegraphics[width=\textwidth]{./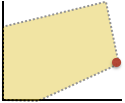}
  \caption{}
  \end{subfigure}
    \begin{subfigure}[b]{0.12\textwidth}
  \includegraphics[width=\textwidth]{./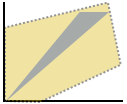}
  \caption{}
  \end{subfigure}
    \begin{subfigure}[b]{0.12\textwidth}
  \includegraphics[width=\textwidth]{./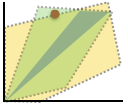}
  \caption{}
  \end{subfigure}
  \caption{{\small (a) After observing an agent's behavior in an environment, there is some set of rewards $K(E,\pi)$ consistent with the observed behavior, depicted by the shaded region. Previous work has been concerned with designing selection rules that pick some point in this region, depicted here by the red circle. (b) No amount of experimentation can remove the representational unidentifiability from the setting, depicted here by the darker shaded region. (c) Nevertheless, adding the constraints $K(E',\pi')$ induced by a second experiment disproves the original selection.}, removing some experimental unidentifiability.}
  \label{fig:cartoon}
\end{figure}

A more concrete example is given in Figure \ref{fig:mazes}, which depicts a grid-world with each square representing a state. In each of the figures, thick lines represent impenetrable walls, and an agent's policy is depicted by arrows, with a circle indicating the agent deciding to stay at a grid location. The goal of the learner is to infer the reward of each state. Figures \ref{fig:mazes}(a) and \ref{fig:mazes}(b), depict the same agent policy, which takes the shortest path to the location labeled $x$ from any starting location. One explanation for such behavior, depicted in Figure \ref{fig:mazes}(a), is that the agent has large reward for state $x$, and zero reward for every other state. However, an equally possible explanation is that the state $y$ also gives positive reward (but smaller than that of $x$) such that if there exists a shortest path to $x$ that also passes through $y$, the agent will take it (depicted in Figure \ref{fig:mazes}(b)). Without additional information, these two explanations cannot be distinguished. 

This is an example of experimental unidentifiability that can nevertheless be resolved with additional experimentation. By observing the same agent in the environment depicted in Figure \ref{fig:mazes}(c), the learner infers that $y$ is indeed a rewarding state. Finally, observing the agent's behavior in the environment of Figure \ref{fig:mazes}(d) reveals that the agent will prefer traveling to state $y$ if getting to $x$ requires 11 steps or more, while getting to $y$ requires 4 steps of fewer. These subsequent observations allow the learner to relate the agent's reward at state $x$ with the agent's reward at state $y$. 

\begin{figure}
\centering
    \begin{subfigure}[b]{0.33\textwidth}
  \includegraphics[width=\textwidth]{./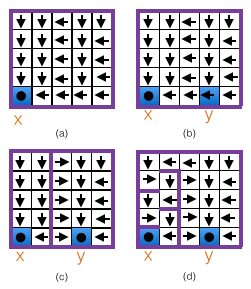}
  \end{subfigure}
  \caption{{\small (a) An agent's policy in a fixed environment. An agent can move in one of four directions or can stay at a location (represented by the black circle). Thick purple lines represent impassable walls. (d) An experiment revealing that if getting to $x$ requires 11 steps or more, and getting to $y$ requires 4 or fewer, the agent prefers $y$.}}
  \label{fig:mazes}
\end{figure}

\section{Omnipotent Experimenter\\ Setting}\label{sec:omni}

We now consider a repeated experimentation setting in which the environments available for selection by the experimenter are completely unrestricted. 
Formally, 
each environment $E$ selected by the experimenter belongs to a class $\mathcal{\mathcal{U}}^*$ containing an environment $(S, A, P, \gamma)$ for every feasible set of transition dynamics $P$ on $S$. We call this the {\bf omnipotent experimenter setting}. 

We will describe an algorithm for the omnipotent experimenter setting that $\epsilon$-identifies $R$, using just $O(\log(d/\epsilon))$ experiments. While the omnipotent experimenter is extremely powerful, the result demonstrates 
that the guarantee obtained in a repeated IRL setting can be far stronger than available in a standard single-environment IRL setting.
Furthermore, it clarifies the distinction between experimental unidentifiability and representational unidentifiability.

\subsection{Omnipotent Identification Algorithm}\label{sec:omnialg} The algorithm proceeds in two stages, both of which involve simple binary searches. The first stage will identify states $s_{\min}, s_{\max}$ such that $R(s_{\min}) = R_{\min}$ and $R(s_{\max}) = R_{\max}$. The second stage identifies for each $s \in S$ an $\alpha_s$ such that $R(s) = \alpha_s R_{\min} + (1-\alpha_s) R_{\max}$. Throughout, the algorithm only makes use of two agent actions which we will denote $a_1,a_2$. Therefore, in describing the algorithm, we will assume that $|A| = 2$, and the environment selected by the algorithm is fully determined by its choices for $P_{a_1}$ and $P_{a_2}$. If in fact $|A| > 2$, in the omnipotent experimenter setting, one can reduce to the two-action setting by making the remaining actions in $A$ equivalent to either $a_1$ or $a_2$.\footnote{Doing so is possible in this setting because transition dynamics can be set arbitrarily.} 

We first address the task of identifying $s_{\max}$. Suppose we have two candidates $s$ and $s'$ for $s_{\max}$. The key idea in this first stage of the algorithm is to give the agent an absolute choice between the two states by setting $P_{a_1}(s,s) = 1$, $P_{a_1}(s',s') = 1$, while setting $P_{a_2}(s,s') = 1$ and $P_{a_2}(s',s) = 1$. An agent selecting $\pi(s) = a_1$ reveals (for any $\gamma$) that $R(s) \geq R(s')$, while an agent selecting $\pi(s) = a_2$ reveals that $R(s) \leq R(s')$. This test can be conducted for up to $d/2$ distinct pairs of states in a single experiment. Thus given $k$ candidates for $s_{\max}$, in a single experiment, we can narrow the set of candidates to $k/2$, and are guaranteed that one of the remaining states $s$ satisfies $R(s) = R_{\max}$. After $\log(d)$ such experiments we can identify a single state $s_{\max}$ which satisfies $R(s_{\max}) \geq R(s)$ for all $s$. Conducting an analogous procedure identifies a state $s_{\min}$. 

Once $s_{\min}$ and $s_{\max}$ are identified, take $s_{1},...,s_{d-2}$ to be the remaining states, and consider an environment with transition dynamics parameterized by $\mathbf{\alpha} = (\alpha_{s_1},...,\alpha_{s_{d-2}})$. A typical environment in this phase is depicted in Figure \ref{fig:omni}. The environment sets $s_{\min}, s_{\max}$ to be sinks with $P_{a_1}(s_{\min},s_{\min}) = P_{a_1}(s_{\max},s_{\max}) = P_{a_2}(s_{\min},s_{\min}) = P_{a_2}(s_{\max}, s_{\max}) = 1$. For each remaining $s_i$, $P_{a_1}(s_i,s_{\min}) = \alpha_{s_i}$ and $P_{a_1}(s_i,s_{\max}) = (1-\alpha_{s_i})$, so that taking action $a_1$ in state $s_i$ represents an $\alpha_i$ probability gamble between the best and worst state. Finally, $P_{\mathbf{\alpha}}$ also sets $P_{a_2}(s,s) = 1$, and so taking action $a_2$ in state $s_i$ represents receiving $R(s_i)$ for sure. By selecting $\pi(s) = a_1$, the agent reveals $\alpha_s R_{\min} + (1-\alpha_s) R_{\max} \geq R(s)$, while a choice $\pi(s) = a_2$ reveals that $\alpha_s R_{\min} + (1-\alpha_s) R_{\max} \leq R(s)$. Thus, a binary search can be conducted on each $\alpha_s \in [0,1]$ independently in order to determine an $\epsilon$ approximation of the $\alpha^*_s$ such that $R(s) = \alpha_s^* R_{\min} + (1-\alpha^*_s)R_{\max}$. 

\begin{figure}
        \centering
    \begin{subfigure}[b]{0.2\textwidth}
      \includegraphics[width=\textwidth]{./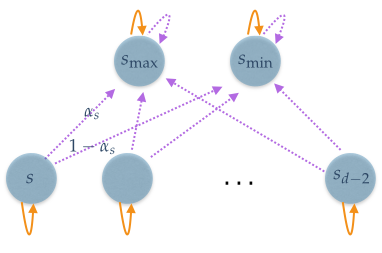}
    \end{subfigure}
  \caption{{\small Typical environment in the second phase of the algorithm. The dotted lines represent transitions for action $a_1$, while the solid lines represent transitions for action $a_2$.}}
\vspace{-1em}
  \label{fig:omni}
\end{figure}

The algorithm succeeds at $\epsilon$-identification, summarized in the following theorem. The proof of the theorem is a straightforward analysis of binary search.

\begin{thm}\label{thm:simple}
Let $\hat{R}$ be defined by letting $\hat{R}(s_{\min}) = 0$, $\hat{R}(s_{\max}) = 1$, and $\hat{R}(s) = 1 - \alpha_s$ for all other $s$ (where $s_{\min}$, $s_{\max}$, and $\alpha_s$ are identified as described above). For any true reward function $R \not\in [\vec{0}]$ with canonical form $[R]$, $||[R] - \hat{R}||_{\infty} \leq \epsilon$. 
\end{thm}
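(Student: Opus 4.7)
The plan is to split the analysis along the algorithm's two phases and then convert the standard guarantees of binary search into an $\ell_\infty$ bound on $\hat R$. I will use the hypothesis $R \not\in [\vec 0]$ only to conclude $R_{\max} > R_{\min}$, which both makes the canonical element $[R](s) = (R(s) - R_{\min})/(R_{\max} - R_{\min})$ well-defined and guarantees a unique $\alpha_s^\star \in [0,1]$ solving $R(s) = \alpha_s^\star R_{\min} + (1-\alpha_s^\star) R_{\max}$.

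For Phase 1, I would fix a single pairwise comparison $(s,s')$ within one round. The constructed dynamics restrict the agent's reachable futures from either of $s, s'$ to just two sequences: stay at the current state forever, or cross once and then stay. Applying Theorem \ref{thm:ngrussel} (equivalently, a direct evaluation of the Bellman equation on the two-state chain induced by $\pi$), the constraint $(P_\pi - P_a)(I - \gamma P_\pi)^{-1} R \ge 0$ at $s$ collapses, after the common $\tfrac{1}{1-\gamma}$ and $\gamma$ factors cancel, to the scalar inequality $R(s) \ge R(s')$. Hence $\pi(s) = a_1 \Leftrightarrow R(s) \ge R(s')$, and running $\lceil d/2 \rceil$ such comparisons in parallel per experiment realizes a single-elimination tournament whose survivor after $\lceil \log_2 d\rceil$ experiments is some $s_{\max}$ with $R(s_{\max}) = \max_{s''} R(s'')$; the argument for $s_{\min}$ is symmetric.

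For Phase 2, in the gadget around each candidate $\alpha_s$, action $a_2$ keeps the agent in $s$ forever while $a_1$ sends it one step later to either of the sinks $s_{\min}, s_{\max}$ with probabilities $\alpha_s, 1-\alpha_s$. Another invocation of Theorem \ref{thm:ngrussel} reduces the optimality constraint to $\pi(s) = a_1 \Leftrightarrow \alpha_s R_{\min} + (1-\alpha_s) R_{\max} \ge R(s)$, which is monotone in $\alpha_s$. A $\lceil \log_2(1/\epsilon)\rceil$-round bisection on $[0,1]$ therefore returns an $\alpha_s$ with $|\alpha_s - \alpha_s^\star| \le \epsilon$, and all $d-2$ bisections may be carried out in parallel since the per-state gadgets do not interact. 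Solving the defining equation gives $1 - \alpha_s^\star = (R(s) - R_{\min})/(R_{\max} - R_{\min}) = [R](s)$, so $|\hat R(s) - [R](s)| = |\alpha_s - \alpha_s^\star| \le \epsilon$ on the interior states, while $\hat R(s_{\min}) = 0 = [R](s_{\min})$ and $\hat R(s_{\max}) = 1 = [R](s_{\max})$ hold exactly. The $\ell_\infty$ bound follows, with overall experiment count $O(\log(d/\epsilon))$.

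The main obstacle in executing this plan is verifying the two algebraic reductions: that Theorem \ref{thm:ngrussel}'s vector inequality really does collapse, uniformly in $\gamma \in (0,1)$, to $R(s) \ge R(s')$ in Phase 1 and to $R(s) \le \alpha_s R_{\min} + (1-\alpha_s) R_{\max}$ in Phase 2. Once these two one-shot computations are carried out, the tournament correctness, binary-search accuracy, and match with the canonical element are all routine.
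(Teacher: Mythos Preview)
Your proposal is correct and follows exactly the route the paper intends: the paper itself offers no detailed proof, stating only that ``the proof of the theorem is a straightforward analysis of binary search,'' and your two-phase decomposition (tournament on pairwise comparisons, then per-state bisection on the lottery parameter) is precisely that analysis spelled out. The one cosmetic point is that the biconditional $\pi(s)=a_1 \Leftrightarrow R(s)\ge R(s')$ is slightly too strong at ties (both actions are optimal when $R(s)=R(s')$), but your tournament argument only needs the one-sided implications the paper states, so nothing in the proof is affected.
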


The takeaway of this setting is that the problems regarding identification in IRL can be circumvented with repeated experimentation. It is thought that even with policy observations, the IRL question is fundamentally ill-posed. However, here we see that with repeated experimentation it is in fact possible to identify $R$ to arbitrary precision in a well-defined sense. While these results are informative, we believe that it is unrealistic to imagine that the learner can arbitrarily influence the environment of the agent. In the next section, we develop a theory for repeated experimentation when the learner is restricted to select environments from some restricted subset of all possible transition dynamics. 

\section{Restricted Experimenter Setting}

We now consider a setting in which the experimenter has a restricted universe $\mathcal{U}$ of environments to choose from. $\mathcal{U}$ need not contain every possible transition dynamic, an assumption required to execute the binary search algorithm of the previous section. 
The best the experimenter could ever hope for is to try every environment in $\mathcal{U}$. This gives the experimenter all the available information about the agent's reward function $R$. Thus, we will be more interested in maximizing the information gained by the experimenter while minimizing the number of experiments conducted. In practice, observing an agent may be expensive, or hard to come by, and so for even a small budget of experiments $B$, the learner would like select the environments from $\mathcal{U}$ which maximally reduce experimental unidentifiability.

Once a sequence of experiments $\mathcal{E}$ has been observed, we know that $R$ is consistent with the observed sequence if and only if $R \in K(\mathcal{E})$. Thus, the value of repeated experimentation is allowing the learner to select environments so that $K(\mathcal{E})$ is as informative as possible. In contrast, we note that previous work on IRL has largely been focused on designing heuristics for the selection problem of picking some $R$ from a fixed set (of equally possible reward functions). Thus, we will be interested in making $K(\mathcal{E})$ ``small,'' while IRL has traditionally been focused on selecting $R$ from exogenously fixed $K(\mathcal{E})$. Before defining what we mean by ``small'', we will review preexisting methods for selecting $R \in K(\mathcal{E})$.

\subsection{Generalized Selection Heuristics}
\label{sec:heuristics}

In the standard (single-environment) setting, given an environment $E$ and observed policy $\pi$, the learner must make a selection among one of the rewards in $K(E, \pi)$. The heuristic suggested by \citet{ng2000algorithms} is motivated by the idea that for a given state $s$, the reward function that maximizes the difference in Q-value between the observed action in state $s$, $\pi(s)$, and any other action $a \not= \pi(s)$, gives the strongest explanation of the behavior observed from the agent. Thus, a reasonable linear selection criterion is to maximize the sum of these differences across states. Adding a regularization term, encourages the selection of reward functions that are also sparse. Putting these together, the standard selection heuristic for single-environment IRL is to select the $R$ which maximizes:

\begin{equation}\label{eqn:classic}
{\small \sum_{s \in S} \left( \min_{a \not= \pi(s)} (P_\pi(s) - P_a(s))(I - \gamma P_{\pi})^{-1}R \right) - \lambda |R(s)|}
\end{equation}

There are two natural candidates for generalizing this selection rule to the repeated experimentation setting, where now instead of a single experiment, the experimenter has encountered a sequence of observations $\mathcal{E}$. The first is to \emph{sum over all (environment, state), pairs}, the minimum difference in Q-value between the action selected by the agent and any other action. The second is to sum over states, taking the \emph{minimum over all (environment, action), pairs}. While one could make arguments motivating each of these, ultimately any such objective is heuristic. However, we do argue that there is a strong algorithmic reason for preferring the latter objective. In particular, the former objective grows in dimensionality as environments are added, quickly resulting in an intractable LP. The dimension of the objective in the latter (Equation \ref{eq:rule}), however, remains constant.\footnote{Writing Equation \ref{eq:rule} as an LP in standard form requires translating the $\min$ into constraints, and thus the number of constraints grows with the number of experiments, but as we demonstrate in our experimental results, this is tractable for most LP solvers.}

{\small \begin{equation}\label{eq:rule}
\underset{R \in K(\mathcal{E})}{\mathrm{maximize}} \sum_{s \in S} \left( \min_{\underset{a \not= \pi^i(s)}{ (E^i, \pi^i) \in \mathcal{E}}} (P_{\pi}^i(s) - P_{a}^i(s))(I - \gamma P_{\pi}^i)^{-1}R \right) - \lambda |R(s)|
\end{equation}}

There are other selection rules for the single-environment setting, which are generalizable to the repeated experimentation setting, including heuristics for the infinite state setting, trajectory heuristics, as well as approaches already adapted to multiple environments \cite{ratliff2006maximum}. Due to space constraints, we discuss only the foundational approach of \citet{ng2000algorithms}. Our goal here is simply to emphasize the dichotomy between adapting pre-existing IRL methods to data gathered from multiple environments (however that data was generated), and the problem of how to best select those environments to begin with, this latter problem being the focus of the next section.

\begin{figure*}[t]
  \centering
    \begin{subfigure}[b]{.45\textwidth}
      \includegraphics[width=\textwidth]{./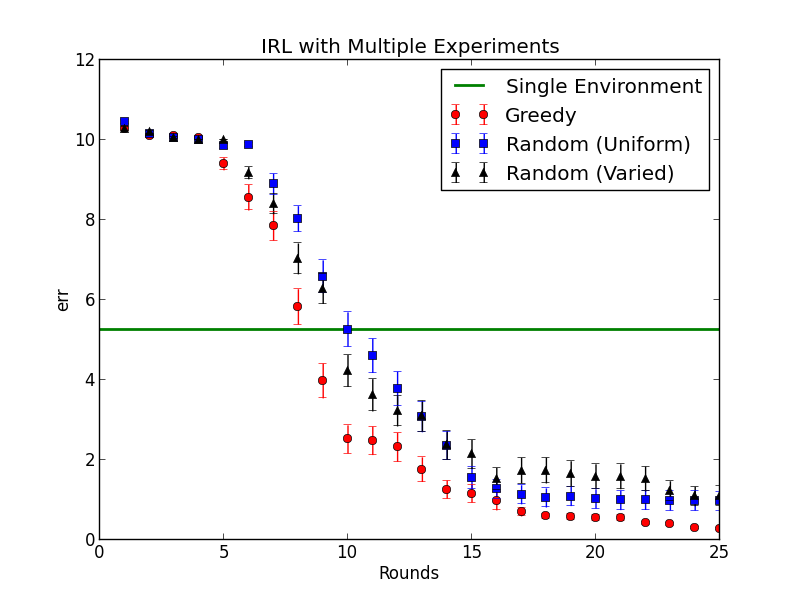}
      \caption{Policy Observations}
      \label{fig:lambda0}
    \end{subfigure}
    \begin{subfigure}[b]{.45\textwidth}
      \includegraphics[width=\textwidth]{./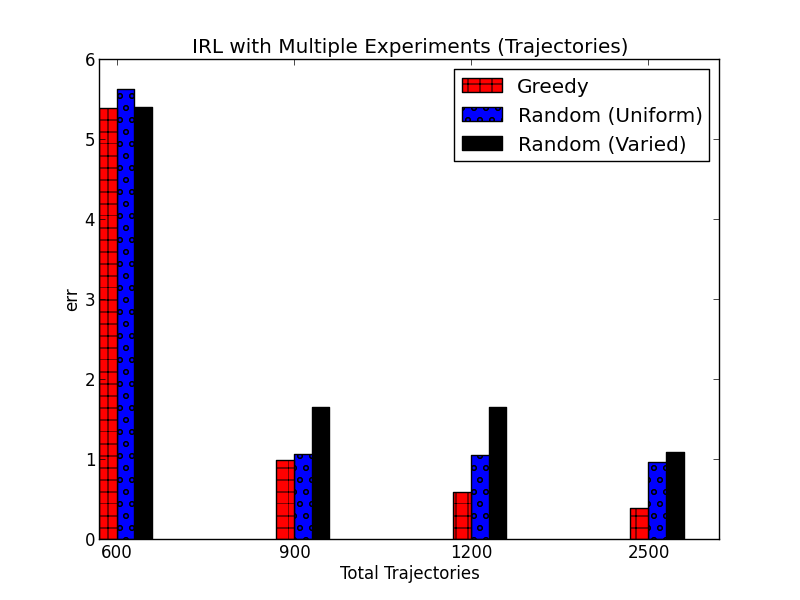}
      \caption{Trajectory Observations}
      \label{fig:traj}
    \end{subfigure}

  \caption{ Plot (a) displays $||\hat{R}||_{\infty}$ error for predicted vector $\hat{R}$ in the policy observation setting, with bars indicating standard error. Plot (b) displays the same in the trajectory setting. }
  \label{fig:results2}
\end{figure*}

\subsection{Adaptive Experimentation}
\label{sec:greedy}
Given a universe $\mathcal{U}$ of candidate environments, we now ask how to select a small number of environments from $\mathcal{U}$ so that the environments are maximally informative. We must first decide what we mean by ``informative.'' We propose that for a set of experiments $\mathcal{E}$ (either in the policy or trajectory setting), a natural objective is to minimize the mass of the resulting space of possible rewards $K(\mathcal{E})$ with respect to some measure (or distribution) $\mu$. Under the Lebesgue measure (or uniform distribution), this corresponds to the natural goal of reducing the volume of the $K(\mathcal{E})$ as much as possible. Thus we define: \begin{align*}\mathrm{Vol}_{\mu}(K(\mathcal{E})) &= \int_{\mathbb{R}^d} \mathbf{1}[R \in K(\mathcal{E})] \mathrm{d}\mu(R)\\
 				 &= \mathbb{P}_{R \sim \mu}\left[ R \in K(\mathcal{E}) \right]
\end{align*}

We will find it convenient to cast this as a maximization problem, and therefore also define $f(\mathcal{E}) = V - \mathrm{Vol}_{\mu}(K(\mathcal{E}))$, where $V$ is an upper bound on the volume of $[-R_{\max}, R_{\max}]^d$, and our goal to maximize $f(\mathcal{E})$. 

This objective has several desirable properties. First and foremost, by reducing the volume of $K(\mathcal{E})$ we eliminate the space of possible reward functions (i.e. experimental unidentifiability). Secondly, the repeated experimentation setting is fundamentally an active learning setting. We can think of the true, unknown, $R$ as a function that labels environments $E$ with either a corresponding policy $\pi$ or trajectory $T$. Thus, the volume operator corresponds to reducing the \emph{version space} of possible rewards. Furthermore, as we will see later in this section, the objective is a monotone submodular function, an assumption well-studied in the active learning literature \cite{guillory2010interactive,golovin2010adaptive}, allowing us to prove guarantees for a greedy algorithm. 

Finally, we will normally think of $\mu$ as being the Lebesgue measure, and $\mathrm{Vol}(\cdot)$ as volume in $d$-dimensional Euclidean space (or the uniform distribution on $[-R_{\max}, R_{min}]^d$). However, the choice of $\mu$ makes the objective quite general. For example, by making $\mu$ uniform on an $\epsilon$-net on $\mathbb{R}^d$, $\mathrm{Vol}$ corresponds to counting the number of rewards that are $\epsilon$-apart with respect to some metric. In many settings, $R$ naturally comes from some discrete space, such as the corners of the hypercube $\{0,1\}^d$. Again, this is readily modeled by the correct choice of $\mu$. In fact, $\mu$ can be thought of simply as any prior on $[-R_{\max}, R_{\max}]^d$. 

We are now ready to describe a simple algorithm that adaptively selects environments $E \in \mathcal{U}$, attempting to greedily maximize $f(\cdot)$, depicted as Algorithm \ref{alg:greedy}.

\begin{algorithm}[h]
\begin{algorithmic}[1]
\small \State {\bf Input} $B$
\State $\mathrm{i} := 1$
\State $\mathcal{E} := \emptyset$
\While {$i \leq B$}
\State $E_i := \underset{E}{\arg\max} \underset{R \in K(\mathcal{E})}{\min} \underset{\pi \in \mathrm{OPT}(E, R)}{\min} \small{f(\mathcal{E} \cup (E, \pi)) - f(\mathcal{E})}$ \label{line:max}
\State Observe policy $\pi^i$ for $E^i$. 
\State $\mathcal{E} := (\mathcal{E}, (E^i, \pi^i))$
\State $i := i + 1$
\EndWhile
\State \Return $\mathcal{E}$
\end{algorithmic}
\caption{Greedy Environment Selection}
\label{alg:greedy}
\end{algorithm}

In order to state a performance guarantee about Algorithm \ref{alg:greedy}, we will use the fact that $f(\cdot)$ is a submodular, non-decreasing, function on subsets of environment, observation pairs, $2^{\mathcal{\mathcal{U}} \times O}$, where $O$ is the set of possible observations.

\begin{lem}\label{lem:submod}
$f$ is a submodular, non-decreasing function. 
\end{lem}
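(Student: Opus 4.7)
The plan is to exploit the fact that $K(\mathcal{E}) = \bigcap_{(E,\pi) \in \mathcal{E}} K(E,\pi)$ is defined as an intersection over experiments, so $f(\mathcal{E}) = V - \mathrm{Vol}_\mu(K(\mathcal{E}))$ behaves like the ``covered'' (i.e., ruled-out) mass of a collection of sets. The ground set for the submodularity argument is the set of all possible experiment outcomes $\mathcal{U} \times O$; submodularity and monotonicity are properties of $f$ viewed as a set function over this ground set, independent of how the outcomes are generated.

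The first step is monotonicity. If $\mathcal{E} \subseteq \mathcal{E}'$, then $K(\mathcal{E}') = \bigcap_{e \in \mathcal{E}'} K(e) \subseteq \bigcap_{e \in \mathcal{E}} K(e) = K(\mathcal{E})$, since intersecting against additional sets can only shrink the intersection. By monotonicity of $\mathrm{Vol}_\mu$, this yields $\mathrm{Vol}_\mu(K(\mathcal{E}')) \leq \mathrm{Vol}_\mu(K(\mathcal{E}))$, and therefore $f(\mathcal{E}) \leq f(\mathcal{E}')$.

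The second step is submodularity. Fix $\mathcal{E} \subseteq \mathcal{E}'$ and any experiment $e = (E,\pi)$. Writing the marginal gain as
\begin{align*}
f(\mathcal{E} \cup \{e\}) - f(\mathcal{E}) &= \mathrm{Vol}_\mu(K(\mathcal{E})) - \mathrm{Vol}_\mu(K(\mathcal{E}) \cap K(e)) \\
&= \mathrm{Vol}_\mu\bigl(K(\mathcal{E}) \setminus K(e)\bigr),
\end{align*}
and similarly for $\mathcal{E}'$. From monotonicity we have $K(\mathcal{E}') \subseteq K(\mathcal{E})$, hence $K(\mathcal{E}') \setminus K(e) \subseteq K(\mathcal{E}) \setminus K(e)$. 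Applying $\mathrm{Vol}_\mu$ gives
\begin{equation*}
f(\mathcal{E}' \cup \{e\}) - f(\mathcal{E}') = \mathrm{Vol}_\mu(K(\mathcal{E}') \setminus K(e)) \leq \mathrm{Vol}_\mu(K(\mathcal{E}) \setminus K(e)) = f(\mathcal{E} \cup \{e\}) - f(\mathcal{E}),
\end{equation*}
which is exactly the diminishing-returns definition of submodularity.

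There is no serious obstacle here; the argument is the standard ``covered-mass of a family of sets'' submodularity template. The only thing requiring care is bookkeeping about the ground set: environments alone are not the atoms, because the same $E$ can in principle induce different constraint sets depending on the realized policy or trajectory. Fixing the ground set to be $\mathcal{U} \times O$ (experiment, observation pairs), and noting that $K(\cdot)$ is defined on individual such pairs, makes both steps above go through verbatim in the trajectory setting as well, since $K(E,T)$ is likewise a constraint set that $K(\mathcal{E})$ intersects against. The same manipulations work for any measure $\mu$, justifying the generality claimed for the objective.
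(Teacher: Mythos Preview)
Your proof is correct and follows essentially the same approach as the paper: both arguments express the marginal gain $f(\mathcal{E}\cup\{e\})-f(\mathcal{E})$ as the $\mu$-mass of $K(\mathcal{E})\setminus K(e)$ (the paper writes this via an indicator integral) and then use $K(\mathcal{E}')\subseteq K(\mathcal{E})$ to obtain diminishing returns. Your treatment is slightly cleaner in that you prove monotonicity directly from $\mathcal{E}\subseteq\mathcal{E}'\Rightarrow K(\mathcal{E}')\subseteq K(\mathcal{E})$ rather than reading it off as a byproduct of the nonnegativity of the marginal, but the substance is identical.
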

\begin{proof}
Given a set $\mathcal{S}$ and component $s$, we use $\mathcal{S} + s$ to denote the union of the singleton set $\{s\}$ with $\mathcal{S}$. Let $O$ be the set of possible observations, so that $o$ is a trajectory in the trajectory setting, and a policy in the policy setting. Let $\mathcal{U}$ be the space of possible environments. 

Fix any $\hat{\mathcal{E}} \subset \mathcal{E} \subset 2^{\mathcal{U} \times O}$, and $(E,o) \not\in \mathcal{E}$. By definition of $K(\cdot)$, we have that $K(\mathcal{E} + (E, o)) = K(\mathcal{E}) \cap K(E, o)$ and $K(\mathcal{E}) \subset K(\hat{\mathcal{E}})$, and so: $f((\mathcal{E}, (E,o))) - f(\mathcal{E}) = \mathrm{Vol}(K(\mathcal{E})) - \mathrm{Vol}(K(\mathcal{E}, (E,o))) = \int_{\mathbb{R}^d} \mathbf{1}[R \in K(\mathcal{E}), R \not\in K(E,o)] \mathrm{d}\mu(R)$\\ $\leq \int_{\mathbb{R}^d} \mathbf{1}[R \in K(\hat{\mathcal{E}}), R \not\in K(E,o)] \mathrm{d}\mu(R) = f((\hat{\mathcal{E}}, (E,o))) - f(\hat{\mathcal{E}})$
This establishes submodularity of $f$. Since $\mathcal{E}$ is arbitrary and the right-hand-side of the second equality is non-zero, $f$ is also monotone.
\end{proof}

The performance of any algorithm is a function of how many experiments are attempted, and thus our analysis must take this into account. Let $\mathcal{A}_n$ be a deterministic algorithm that deploys at most $n$ experiments. $\mathcal{A}_n$ has a worst-case performance, which depends on the true reward $R$ and what policies were observed. We say a sequence of experiments $\mathcal{E} = ((E^1, o^1),...,(E^n, o^n))$ is consistent with $\mathcal{A}_n$ and $R$, if $\mathcal{A}_n$ chooses environment $E^{j+1}$ after observing the subsequence of experiments $((E^1, o^1),...,(E^j, o^j))$, and $o^j$ is either a trajectory or policy consistent with $(E,R)$. Denoting the set of consistent experiments $\mathcal{C}(\mathcal{A}_n, R)$, the best performance that any algorithm can \emph{guarantee} with $n$ experiments is: $\mathrm{OPT}_n = \max_{\mathcal{A}_n} \min_{R} \min_{\mathcal{E} \in \mathcal{C}(\mathcal{A}_n, R)} f(\mathcal{E})$

The submodularity of $f$, allows us to prove that for any $n$, the Greedy Environment Selection Algorithm\footnote{n.b. in the trajectory setting, one would replace the minimization over $\pi \in \mathrm{OPT}(E,R)$ in line \ref{line:max} of the algorithm, with a minimization over $T$ consistent with $\pi, \pi \in \mathrm{OPT}(E,R)$.}  needs slightly more than $n$ experiments (by a logarithmic factor) to attain $f(\mathcal{E}) \approx \mathrm{OPT}_n$.

\begin{thm}\label{thm:greedy}
$\mathcal{E}$ returned by the Greedy Environment Selection algorithm satisfies $f(\mathcal{E}) \geq \mathrm{OPT}_n - \epsilon$ when $B = n\ln(\mathrm{OPT}_n/\epsilon) \leq n \ln(V/\epsilon)$. 
\end{thm}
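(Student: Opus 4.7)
The plan is to reduce the claim to a potential-function argument of the kind familiar from greedy analyses of monotone submodular maximization, using Lemma \ref{lem:submod} as the main structural input. Define the potential $\Phi_i = \mathrm{OPT}_n - f(\mathcal{E}_i)$, where $\mathcal{E}_i$ is the greedy's sequence after $i$ iterations. The target per-step inequality is
\[
\Phi_{i+1} \;\leq\; (1 - 1/n)\,\Phi_i,
\]
which, iterated $B$ times and combined with $1 - 1/n \leq e^{-1/n}$, gives $\Phi_B \leq e^{-B/n}\,\Phi_0 \leq e^{-B/n}\,\mathrm{OPT}_n$. Setting $B = n\ln(\mathrm{OPT}_n/\epsilon)$ immediately forces $\Phi_B \leq \epsilon$, which is the desired $f(\mathcal{E}) \geq \mathrm{OPT}_n - \epsilon$. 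The second bound $B \leq n\ln(V/\epsilon)$ is an immediate consequence of $\mathrm{OPT}_n \leq V$, which follows from the definition $f(\mathcal{E}) = V - \mathrm{Vol}_\mu(K(\mathcal{E}))$ and the non-negativity of volume.

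The heart of the proof is establishing the per-step recurrence. First, I would fix a greedy history $\mathcal{E}_i$ and let $R \in K(\mathcal{E}_i)$ be any reward consistent with it. Let $\mathcal{A}^\star$ be an optimal $n$-experiment algorithm realizing $\mathrm{OPT}_n$, and let $\mathcal{E}^\star(R) = \{(E^{\star,j}, o^{\star,j})\}_{j=1}^n$ be the sequence $\mathcal{A}^\star$ produces against true reward $R$ starting from scratch. By definition of $\mathrm{OPT}_n$ we have $f(\mathcal{E}^\star(R)) \geq \mathrm{OPT}_n$, and monotonicity of $f$ (Lemma \ref{lem:submod}) then gives $f(\mathcal{E}_i \cup \mathcal{E}^\star(R)) \geq \mathrm{OPT}_n$. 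Submodularity yields the telescoping bound
\[
\mathrm{OPT}_n - f(\mathcal{E}_i) \;\leq\; \sum_{j=1}^n \bigl[f\bigl(\mathcal{E}_i \cup \{(E^{\star,j}, o^{\star,j})\}\bigr) - f(\mathcal{E}_i)\bigr].
\]
Averaging, some environment $E^{\star,j^\ast}$ enjoys a marginal gain of at least $(\mathrm{OPT}_n - f(\mathcal{E}_i))/n$ when the observation is $o^{\star,j^\ast}$ induced by $R$.

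The main obstacle is that the greedy rule in Line \ref{line:max} takes the \emph{worst case} over $R' \in K(\mathcal{E}_i)$ and $\pi' \in \mathrm{OPT}(E, R')$ before maximizing over $E$, while the decomposition above only produces a good environment for each individual $R$ at its specific $R$-induced observation. I would finesse this by specializing the argument to the $R$ that attains the current worst-case value of $f(\mathcal{E}_i)$: since this $R$ lies in $K(\mathcal{E}_i)$ and $o^{\star,j^\ast}$ is a valid optimal response of $E^{\star,j^\ast}$ against $R$, greedy's choice at step $i{+}1$, operating against this $R$, inherits a realized marginal gain at least $(\mathrm{OPT}_n - f(\mathcal{E}_i))/n$ once paired with an adaptive-submodularity lower bound in the spirit of \cite{golovin2010adaptive,guillory2010interactive}. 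This is exactly the ingredient needed to drive the worst-case potential $\Phi_{i+1}$ down by the required multiplicative factor. Assembling these pieces and iterating yields the theorem; the only step requiring genuinely adaptive reasoning is the worst-case-to-realized-case conversion, with the remainder reducing to the classical submodular greedy analysis.
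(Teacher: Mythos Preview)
Your potential-function framework and the final iteration are exactly right, and they match the paper's concluding step. The gap is in the per-step inequality. Your telescoping argument shows that \emph{for each fixed} $R\in K(\mathcal{E}_i)$ there is some environment $E_R$ (namely one of the $E^{\star,j}$ produced by $\mathcal{A}^\star$ against that particular $R$) whose marginal gain at $R$'s response is at least $(\mathrm{OPT}_n - f(\mathcal{E}_i))/n$. What greedy needs, however, is a \emph{single} environment whose marginal gain is large \emph{uniformly} over all $R\in K(\mathcal{E}_i)$ and all $o\in O(E,R)$; that is the content of Line~\ref{line:max}. Your bound gives $\min_R \max_E g(E,R)\geq \Delta/n$, while the greedy guarantee requires $\max_E \min_R g(E,R)\geq \Delta/n$, and the inequality runs the wrong way. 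The proposed ``finesse'' of specializing to a single worst-case $R$ does not close this: the environment good for that $R$ may be arbitrarily bad for other $R'\in K(\mathcal{E}_i)$, and invoking adaptive submodularity in the style of \cite{golovin2010adaptive} does not directly apply since the objective here is worst-case rather than in expectation over a prior.

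The paper resolves this with a genuinely different device. It introduces a \emph{General Identification Cost} $\mathrm{GIC}_\alpha = \max_{T\in\mathcal{T}} \min_{S:\,f(T(S))\geq\alpha} |S|$, where $\mathcal{T}$ ranges over \emph{all} labeling functions $T:\mathcal{U}\to O$, not just those induced by a single consistent $R$. It first shows $\mathrm{GIC}_{\mathrm{OPT}_n}\leq n$ via a short case analysis (if a labeling is inconsistent on some small set, $f$ already hits $V\geq\mathrm{OPT}_n$; otherwise run $\mathcal{A}^\star$). The key lemma then argues by contradiction: if no single $E$ achieves worst-case marginal gain $\geq (\mathrm{OPT}_n-f(\mathcal{E}))/\mathrm{GIC}_{\mathrm{OPT}_n}$, define the adversarial labeling $T'(E)=\arg\min_{R,o} [f(\mathcal{E}+(E,o))-f(\mathcal{E})]$ (which may use a \emph{different} $R$ for each $E$), take the witnessing set $S$ from the definition of $\mathrm{GIC}$, and derive $f(T'(S)\cup\mathcal{E})<\mathrm{OPT}_n$ from submodularity, contradicting $f(T'(S))\geq\mathrm{OPT}_n$ and monotonicity. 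Allowing $T'$ to mix rewards across environments is precisely what converts your per-$R$ bound into the uniform one that greedy requires.
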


The proof of Theorem \ref{thm:greedy} uses many of the same techniques used by Guillory et. al (\citet{guillory2010interactive}), in their work on interactive set cover. For technical reasons, we cannot state our theorem directly as a corollary of these results, which assume a finite hypothesis class, whereas we have an infinite space of possible rewards. Nevertheless, these proofs are easily adapted to our setting, and the full proofs are given in the appendix. 

Finally we note that Line (\ref{line:max}) is not computable exactly without parametric assumptions on the class of environments or space of rewards. In practice, and as we will describe in the next section, we approximate the exact maximization by sampling environments and rewards from $K(\mathcal{E})$, and optimizing on the sampled sets.

\section{Experimental Analysis}

We now deploy the techniques discussed in a setting, demonstrating that maximizing $f(\cdot)$ is indeed effective for identifying $R$. We imagine that we have an agent that will be dropped into a grid world. The experimenter would like to infer the agent's reward for each space in the grid. We imagine that the experimenter has the power to construct walls in the agent's environment, and so we will alternatively refer to an environment as a \emph{maze}. To motivate the value of repeated experimentation, recall Figure \ref{fig:mazes}. 

This is a restricted environment for the learner. The learner cannot, for example, make it so that an action causes the agent to travel from a bottom corner of the maze to a top corner. However, the learner can modify the dynamics of the environment in so far as it can construct maze walls. 

We evaluate Algorithm \ref{alg:greedy} on grids of size $10 \times 10$. An agent's reward is given by a vector $R \in \mathbb{R}^{100}$, with $|| R ||_{\infty} \leq R_{\max}$, where $R_{\max}$ is taken to be $10$ in all that follows. In each simulation 
we randomly assign some state in $R$ to have reward $R_{\max}$, and assign $5$ states to have reward $1$.\footnote{For motivation, one might think of the agent as being a mouse, with these rewards corresponding to food pellets or various shiny objects in a mouse's cage.} 
The remaining states give reward $0$. The agent's discount rate is taken to be $0.8$. The goal of the learner is not just to determine which states are rewarding, but to further determine that the latter states yield $1/10$ the reward of the former.

In Figure \ref{fig:lambda0}, we display our main experimental results for four different algorithms in the policy observation setting, and in Figure \ref{fig:traj} for the trajectory setting. Error represents $||R - \hat{R}||_\infty$, where $\hat{R}$ is an algorithm's prediction, with error bars representing standard error over $20$ simulations. 

In Figure \ref{fig:lambda0}, the horizontal line displays the best results we achieved without repeated experimentation. If the learner only selects a single environment $E$, observing policy $\pi$, it is stuck with whatever experimental unidentifiability exists in $K(E,\pi)$. In such a scenario, we can select a $K(E,\pi)$ according to a classic IRL heuristic, given by LP (\ref{eqn:classic}) in Section \ref{sec:heuristics}, for some choice of $\lambda$ in LP (\ref{eqn:classic}). Since the performance of this method depends both on which environment is used, and the choice of $\lambda$, we randomly generated $100$ different environments, and for each of those environments selected $\lambda \in \{0.05,.1,.5,1,5,6,7,8,9,10\}$. We then evaluated each of these single-environment approaches with $20$ simulations, the best error among these $1300$ different single-environment algorithms is displayed by the horizontal line. Immediately we see that the experimental unidentifiability  from using a single environment makes it difficult to distinguish the actual reward function, with $\mathrm{err}$ for the best choice of $E$ and $\lambda$ greater than $5$. 

The remaining algorithms --- which we will describe in greater detail below --- conduct repeated experimentation. Each of these algorithms uses a different rule to select a new environment on each round. Given the sequence of (environment, policy) pairs $\mathcal{E}$ generated by each of these algorithms, we solve the LP (\ref{eq:rule}) on $K(\mathcal{E})$ at the end of each round. This is done with the same choice of $\lambda (=0.5)$ for each of the algorithms. 

Besides the $\mathrm{Greedy}$ algorithm of the previous section, we implement two other algorithms, which conduct repeated experiments, but do so non-adaptively. $\mathrm{RandUniform}$, in each round, selects a maze uniformly at random from the space of possible mazes (each wall is present with probability $0.5$). Note that $\mathrm{RandUniform}$ will tend to select mazes where roughly half of the walls are present. Thus, we also consider $\mathrm{RandVaried}$ which, in each round, selects a maze from a different distribution $\mathcal{D}$. Mazes drawn from $\mathcal{D}$ are generated by a two-step process. First, for each row $r$ and column $c$, we select numbers $d_r,d_c$ i.i.d. from the uniform distribution on $[0,1]$. Then each wall along row $r$ (column $c$ respectively) is created with probability $d_r$ ($d_c$ respectively). Although the probability any particular wall is present is still $0.5$, the correlations in $\mathcal{D}$ creates more variable mazes (e.g. allowing an entire row to be sparsely populated with walls). 

We implement Algorithm \ref{alg:greedy}, $\mathrm{Greedy}$, of the previous section, by approximating the maximization in Line \ref{line:max} in Algorithm \ref{alg:greedy}. This approximation is done by sampling $10$ environments from $\mathcal{D}$, the same distribution used by $\mathrm{RandVaried}$. In the policy observation setting, $1000$ samples are first drawn from the consistent set $K(\mathcal{E})$ using a hit-and-run sampler \cite{lovasz1999hit}, which is an MCMC method for uniformly sampling high-dimensional convex sets in polynomial time. These same samples are also used to estimate the volume $f(\cdot)$. In the trajectory setting, we first sample trajectories $T$ on an environment $E$, then we use $K(E,\hat{\pi})$ for an arbitrary $\hat{\pi}, \hat{\pi} \equiv_{\mathcal{D}(T)} \pi_T$, as a proxy for $K(E,T)$.

Examining the results, we see that $\mathrm{Greedy}$ converges significantly quicker than either of the non-adaptive approaches. After $25$ rounds of experimentation in the policy observation setting, $\mathrm{Greedy}$ attains error $0.2687 (\pm 0.0302)$, while the best non-adaptive approach attains $0.9691 (\pm 0.24310)$. $\mathrm{Greedy}$ only requires $16$ rounds to reach a similar error of $0.9678 (\pm 0.0701)$. We note further that the performance of $\mathrm{Greedy}$ seems to continue to improve, while the non-adaptive approaches appear to stagnate. This could be due to the fact that after a certain number of rounds, the non-adaptive approaches have received all the information available from the environments typically sampled from their distributions. In order to make progress they must receive new information, in contrast to $\mathrm{Greedy}$, which is designed to actively select the environments that will do just that. 

Finally, $\mathrm{Greedy}$ runs by selecting a sequence of environments, resulting in observations $\mathcal{E}$. It then selects $R$ from $K(\mathcal{E})$ using LP (\ref{eq:rule}). Thus, the regularization parameter $\lambda$ in LP (\ref{eq:rule}) is a free parameter for $\mathrm{Greedy}$ that we took to be equal to $0.5$ for results (Figure \ref{fig:lambda0}). We conclude by experimentally analyzing the sensitivity of $\mathrm{Greedy}$ to the choice of this parameter, as well as of $\mathrm{RandUniform}$, and $\mathrm{RandVaried}$, which also select $R$ according to LP (\ref{eq:rule}). As $\lambda$ is increased, eventually the LP over-regularizes, and is optimized taking $\mathrm{R} = \vec{0}$. In our setting, once $\lambda \approx 1$ this begins to occur, and we begin to see pathological behavior (Figure \ref{fig:lambda1}). This problem occurs in standard IRL, and one approach (\cite{ng2000algorithms}) is to select a large lambda before this transition, hence our choice of $\lambda = 0.5$. However, even for significantly smaller $\lambda$, the results are qualitatively similar (Figure \ref{fig:lambda2}) to those in Figure \ref{fig:lambda0}. We find that as long as $\lambda$ is not too large, the results are not sensitive to the choice of $\lambda$. 
\begin{figure}[h!]
    \begin{subfigure}[b]{.23\textwidth}
      \includegraphics[width=\textwidth]{./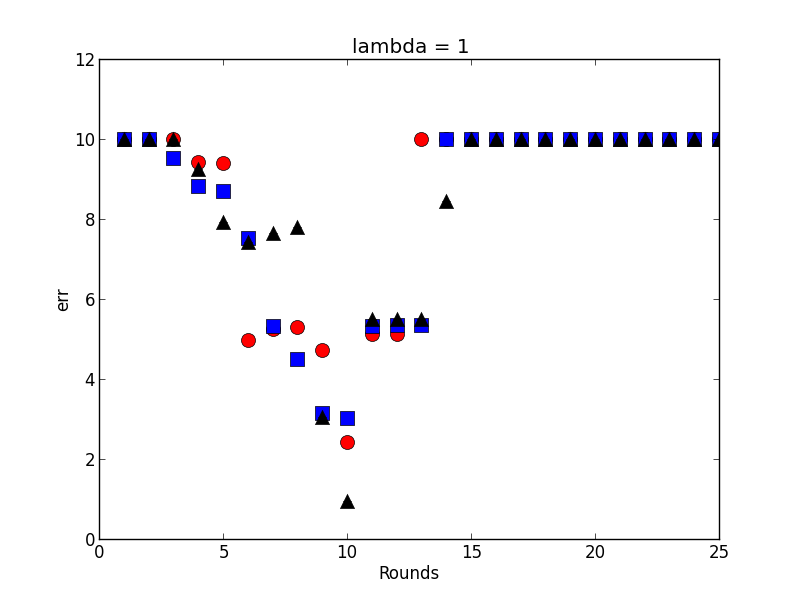}
      \caption{$\lambda = 1$}
      \label{fig:lambda1}
    \end{subfigure}
    \begin{subfigure}[b]{.23\textwidth}
      \includegraphics[width=\textwidth]{./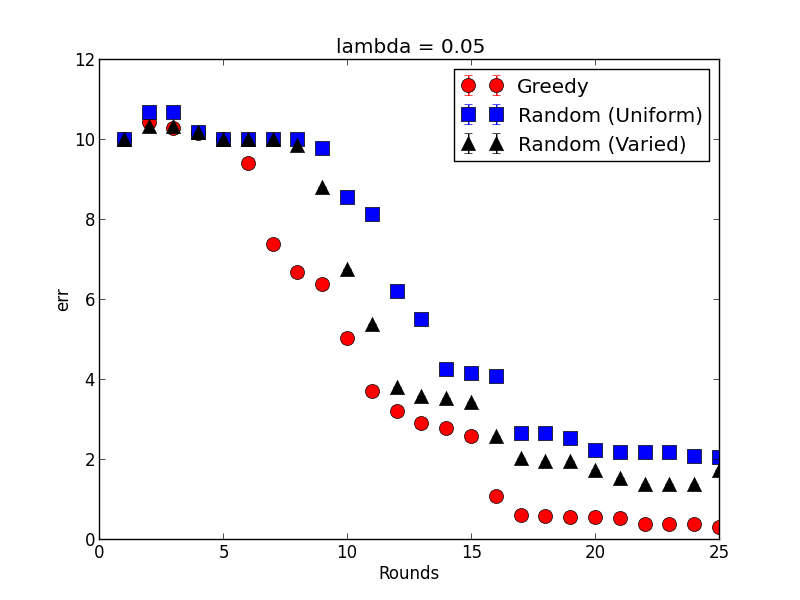}
      \caption{$\lambda = 0.05$}
      \label{fig:lambda2}
    \end{subfigure}
    \caption{Result for all repeated experimentation algorithms using large and small regularization parameter $\lambda$.}
\end{figure}
\section{Conclusions}
We provide a number of contributions in this work. First, we separate the causes of unidentifiability in IRL problems into two classes: representational, and experimental.  We argue that representational unidentifiability is superficial, leading us to redefine the problem of identification in IRL according to Definition \ref{defn:ident}. While previous work does not distinguish between these two classes, we demonstrate that, by doing so, algorithms can be designed to eliminate experimental unidentifiability while providing formal guarantees. 

Along the way, we derive a new model for IRL where the learner can observe behavior in multiple environments, a model which we believe is interesting in its own right, but also is key to eliminating experimental unidentifiability. We give an algorithm for a very powerful learner who can observe agent behavior in any environment, and show that the algorithm $\epsilon$-identifies an agent reward defined on $d$ states, while observing behavior on only $O(\log(d/\epsilon))$ environments. We then weaken this learner to model more realistic settings where the learner might be restricted in the types of environments it may choose, and where it may only be able to elicit a small number of demonstrations from the agent. We derive a simple adaptive greedy algorithm which will select a nearly optimal (with respect to reducing the volume of possible reward function) set of environments. The value of the solution found by this greedy algorithm will be a comparable to the optimal algorithm which uses a logarithmic factor fewer number of experiments. 

Finally, we implement the algorithm in a simple maze environment that nevertheless demonstrates the value of eliminating experimental unidentifiability, significantly outperforming methods that attempt to perform IRL from a single environment.

\bibliographystyle{abbrv}
\bibliography{inverse.bib}

\appendix

\section{Proof of Theorem 3}

\begin{thm}
For any $R,R' \in \mathbb{R}^d$, $R \equiv R'$ if and only if they have the same canonicalized representation. 
\end{thm}
\begin{proof}
By definition, the canonicalized representation of any reward function is attained by scaling and translation. Therefore, by Theorem \ref{thm:equiv}, if $R$ and $R'$ are both canonicalized as $R_c$, we have that $R \equiv R_c$ and $R' \equiv R_c$, and therefore $R \equiv R'$. 

In the other direction, suppose $R$ and $R'$ are canonicalized to $R_c$ and $R_c'$ respectively, where $R_c \not= R_c'$. Again, by Theorem \ref{thm:equiv}, we have that $R \equiv R_c$ and $R' \equiv R_c'$. Thus, to prove the theorem, it is sufficient to argue that $R_c$ and $R_c'$ are not behaviorally equivalent. 

If one of $R_c,R_c'$ is $\vec{0}$ and the other is not, then it is straightforward to show that they are not behaviorally equivalent. Thus, we focus on the case where both $R_c$ and $R'_c$ are not $\vec{0}$. We consider three cases. 

First, suppose that $R_c \not= R_c'$ because they have different minimally-rewarding states. Without loss of generality suppose that there is some $s_0$ with $R_c(s_0) = 0$ but $R_c'(s_0) > 0$. Furthermore, let $s'_0$ be any state such that $R_c'(s'_0) = 0$. Consider an environment $E$ with two actions $a$ and $a'$. Action $a$ deterministically transitions to state $s_0$ from any other state, while action $a'$ determininstically transitions to state $s_0'$ from any other state. Let $\pi_a$ be the policy that always takes action $a$. $\mathrm{OPT}(E,R'_c) = \{\pi_a\}$. However, if $\pi_a \in \mathrm{OPT}(E, R_c)$, this means that $R_c(s_0') = 0$, and therefore all policies are in $\mathrm{OPT}(E,R_c)$. Thus, $\mathrm{OPT}(E,R_c) \not= \mathrm{OPT}(E,R'_c)$, and $R_c,R'_c$ are not behaviorally equivalent. 

Next, suppose that $R_c \not= R_c'$ because they have different maximally-rewarding states. Analagously to the previous case, suppose without loss of generality there is some $s_0$ with $R_c(s_0) = 1$ by $R'_c(s_0) < 0$, and let $s'_0$ be any state such that $R_c'(s_0') = 1$ (which exists since $R_c' \not= \vec{0}$). Define the environment $E$ in the same way as the previous case. This time, $\mathrm{OPT}(E,R'_c) = \{\pi_{a'}\}$, while $\mathrm{OPT}(E,R_c) \not= \{\pi_{a'}\}$. 

Finally, suppose that $R_c$ and $R_c'$ share the same maximally and minimally rewarding states, but there exists some $s$ such that $R_c(s) \not= R_c'(s)$. Let $s_0$ be any state such that $R_c(s_0) = R_c'(s_0) = 0$ and let $s_1$ be any state such that $R_c(s_1) = R'_c(s_1) = 1$. Without loss of generality suppose that $R_c(s) < R_c'(s)$. Let $E$ be the environment with two actions $a$ and $a_p$. Let $p$ be any real number $0 \leq R_c(s) < p < R_c'(s) \leq 1$. From every state, action $a_p$ transitions to state $s_1$ with probabiity $p$ and to state $s_0$ with the remaining probability. From every state action $a$ transtions to state $s$ deterministically. The reward for taking action $a_p$ in any state under either reward function is $p$, while action $a$ gives a reward of $R_c(s) < p$ under $R_c$ and $R'_c(s) > p$ under $R_c'$. Thus, $\mathrm{OPT}(E,R_c) = \{\pi_{a_p}\} \not= \{\pi_{a}\} = \mathrm{OPT}(E,R'_c)$, concluding the proof.
\end{proof}

\section{Proof of Greedy's Performance}

Given a set $\mathcal{S}$ and component $s$, we use $\mathcal{S} + s$ to denote the union of the singleton set $\{s\}$ with $\mathcal{S}$. We begin by redefining:

$$\mathrm{Vol}_{\mu}(K(\mathcal{E})) = \int_{\mathbb{R}^d} \mathbf{1}[R \in K(\mathcal{E})] \mathrm{d}\mu(R)$$ 

$$f(\mathcal{E}) = V - \mathrm{Vol}_{\mu}(K(\mathcal{E}))$$ where $V$ is an upper bound $\mathrm{Vol}_{\mu}([-R_{\max}, R_{\max}]^d)$.

Let $O$ be the set of possible observations, so that $o$ is a trajectory in the trajectory setting, and a policy in the policy setting. Let $\mathcal{U}$ be the space of possible environments. WWe first establish that $f$ is indeed submodular. 

\begin{lem}
$f$ is a submodular, non-decreasing function on $2^{\mathcal{U} \times O}$. 
\end{lem}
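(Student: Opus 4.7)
The plan is to work directly from the formula $f(\mathcal{E}) = V - \mathrm{Vol}_{\mu}(K(\mathcal{E}))$ together with two structural facts about $K(\cdot)$. First, because $K$ was defined as an intersection of per-experiment constraint sets, we have $K(\mathcal{E} + (E,o)) = K(\mathcal{E}) \cap K(E,o)$ for any sequence $\mathcal{E}$ and any new pair $(E,o)$. Second, if $\hat{\mathcal{E}} \subseteq \mathcal{E}$ then $K(\mathcal{E}) \subseteq K(\hat{\mathcal{E}})$, since adding experiments can only tighten the constraints. Everything else is a short computation with indicator functions.

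The first step I would take is to rewrite the marginal gain of adding a single experiment as an integral of an indicator. Using the two facts above,
\begin{align*}
f(\mathcal{E} + (E,o)) - f(\mathcal{E}) &= \mathrm{Vol}_\mu(K(\mathcal{E})) - \mathrm{Vol}_\mu(K(\mathcal{E}) \cap K(E,o)) \\
&= \int_{\mathbb{R}^d} \mathbf{1}[R \in K(\mathcal{E}),\ R \notin K(E,o)]\, d\mu(R).
\end{align*}
This expression is manifestly nonnegative, which yields monotonicity immediately: adding an experiment can only decrease $\mathrm{Vol}_\mu(K(\mathcal{E}))$, hence only increase $f$.

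For submodularity, I fix arbitrary $\hat{\mathcal{E}} \subseteq \mathcal{E}$ and $(E,o) \notin \mathcal{E}$ and compare the two marginal gains in their indicator-integral form. Because $K(\mathcal{E}) \subseteq K(\hat{\mathcal{E}})$, the pointwise inequality
$$\mathbf{1}[R \in K(\mathcal{E}),\ R \notin K(E,o)] \leq \mathbf{1}[R \in K(\hat{\mathcal{E}}),\ R \notin K(E,o)]$$
holds for every $R \in \mathbb{R}^d$, since the left-hand indicator being $1$ forces $R \in K(\mathcal{E}) \subseteq K(\hat{\mathcal{E}})$ while $R \notin K(E,o)$ is the same condition on both sides. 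Integrating against $\mu$ gives $f(\mathcal{E} + (E,o)) - f(\mathcal{E}) \leq f(\hat{\mathcal{E}} + (E,o)) - f(\hat{\mathcal{E}})$, which is the defining inequality for submodularity.

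There is no real obstacle here beyond measurability bookkeeping: each $K(E,o)$ is defined by a finite collection of linear inequalities (Theorem \ref{thm:ngrussel}) and thus is a convex polytope in $\mathbb{R}^d$, so its indicator is Borel-measurable and the integrals above are well-defined for any Borel measure $\mu$ supported on $[-R_{\max},R_{\max}]^d$. The entire argument is essentially the same one given for Lemma \ref{lem:submod} in the body of the paper, reproduced with the appendix's slightly more careful setup of $\mathrm{Vol}_\mu$ and $V$.
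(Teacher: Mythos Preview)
Your proposal is correct and follows essentially the same argument as the paper: write the marginal gain as $\int \mathbf{1}[R \in K(\mathcal{E}),\, R \notin K(E,o)]\,d\mu(R)$, observe it is nonnegative for monotonicity, and use the containment $K(\mathcal{E}) \subseteq K(\hat{\mathcal{E}})$ to compare indicators pointwise for submodularity. Your added measurability remark is a nice bit of extra care not present in the paper's version.
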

\begin{proof}
Fix any $\hat{\mathcal{E}} \subset \mathcal{E} \subset 2^{\mathcal{U} \times O}$, and $(E,o) \not\in \mathcal{E}$. By definition of $K(\cdot)$, we have that $K(\mathcal{E} + (E, o)) = K(\mathcal{E}) \cap K(E, o)$ and $K(\mathcal{E}) \subset K(\hat{\mathcal{E}})$, and so: \begin{align*}
f((\mathcal{E}, (E,o))) &- f(\mathcal{E}) = \mathrm{Vol}(K(\mathcal{E})) - \mathrm{Vol}(K(\mathcal{E}, (E,o))) \\ &= \int_{\mathbb{R}^d} \mathbf{1}[R \in K(\mathcal{E}), R \not\in K(E,o)] \mathrm{d}\mu(R) \\&\leq \int_{\mathbb{R}^d} \mathbf{1}[R \in K(\hat{\mathcal{E}}), R \not\in K(E,o)] \mathrm{d}\mu(R) \\ &= f((\hat{\mathcal{E}}, (E,o))) - f(\hat{\mathcal{E}})\end{align*}
This establishes submodularity of $f$. Since $\mathcal{E}$ is arbitrary and the right-hand-side of the second equality is non-zero, $f$ is also monotone. 
\end{proof}

Let $\mathcal{T} = \{ T : \mathcal{U} \rightarrow O\}$ denote the set of functions mapping environments to observations. For any $T \in \mathcal{T}$ and $S \subset \mathcal{U}$, overload $T$, so that $T(S) = \cup_{E \in S} (E, T(E))$. 

Now suppose that environments where labeled according to some $T \in \mathcal{T}$, and consider an algorithm which knowing $T$, selects the fewest number of environments $S$, so that $f(T(S)) \geq \alpha$. Given such an algorithm, we can now define the General Identification Cost, which identifies the worst-possible labelling strategy in $\mathcal{T}$. In particular:

$$\mathrm{GIC}_{\alpha} = \max_{T \in \mathcal{T}} \min_{S \subset \mathcal{U} : f(T(S)) \geq \alpha} |S|$$

Recall the definition from the main body:
$$\mathrm{OPT}_n = \max_{\mathcal{A}_n} \min_{R} \min_{\mathcal{E} \in \mathcal{C}(\mathcal{A}_n, R)} f(\mathcal{E})$$

This is the largest that an algorithm can guarantee to make $f(\cdot)$ with $n$ environments, when environments are consistently labeled by some $R$. Let $A^*$ be the algorithm satisfying the $\max$.  

\begin{lem}\label{lem1}
$\mathrm{GIC}_{\mathrm{OPT}_n} \leq n$
\end{lem}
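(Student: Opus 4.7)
The plan is to exhibit, for every labeling $T \in \mathcal{T}$, a set $S \subset \mathcal{U}$ of at most $n$ environments such that $f(T(S)) \geq \mathrm{OPT}_n$. The natural construction is to let $A^*$ be an algorithm attaining the outer maximum in the definition of $\mathrm{OPT}_n$, and simply run $A^*$ while using $T$ as the ``oracle'' that supplies each observation: whenever $A^*$ queries an environment $E$, feed it back the observation $T(E)$. Since $A^*$ is deterministic and deploys at most $n$ experiments, this produces a sequence of environments $E^1, \ldots, E^m$ with $m \leq n$, each $E^{j+1}$ determined by the previous subsequence $((E^1, T(E^1)), \ldots, (E^j, T(E^j)))$. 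Take $S = \{E^1, \ldots, E^m\}$; then $T(S) = \{(E^j, T(E^j))\}_{j=1}^m$ and $|S| \leq n$.

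It then remains to check $f(T(S)) \geq \mathrm{OPT}_n$. I would split into two cases based on whether $K(T(S))$ is empty. In the first case, suppose there exists some $R \in K(T(S))$. By the definition of $K(E,\pi)$ (or $K(E,T)$ in the trajectory setting) this is exactly the statement that each $T(E^j)$ is an observation consistent with $(E^j, R)$, so the sequence $T(S)$ belongs to $\mathcal{C}(A^*, R)$. By the definition of $\mathrm{OPT}_n$ and the choice of $A^*$,
\[
f(T(S)) \;\geq\; \min_{\mathcal{E} \in \mathcal{C}(A^*, R)} f(\mathcal{E}) \;\geq\; \min_{R'} \min_{\mathcal{E} \in \mathcal{C}(A^*, R')} f(\mathcal{E}) \;=\; \mathrm{OPT}_n.
\]
In the second case, $K(T(S)) = \emptyset$, so $\mathrm{Vol}_{\mu}(K(T(S))) = 0$ and $f(T(S)) = V$. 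Since $f$ is bounded above by $V$, we trivially have $\mathrm{OPT}_n \leq V$, hence $f(T(S)) \geq \mathrm{OPT}_n$ in this case as well.

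Combining the two cases, for every $T \in \mathcal{T}$ the set $S$ witnesses $\min_{S \subset \mathcal{U} : f(T(S)) \geq \mathrm{OPT}_n} |S| \leq n$. Taking the max over $T$ yields $\mathrm{GIC}_{\mathrm{OPT}_n} \leq n$.

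The main subtlety — and the only real step that needs care — is the handling of adversarial labelings $T$ that are inconsistent with every reward function. A reader might worry that if $T$ is not induced by any true $R$, then no guarantee on $A^*$ applies, because $\mathrm{OPT}_n$ only promises performance against sequences in $\mathcal{C}(A^*, R)$ for some $R$. The key observation that resolves this is that ``inconsistent with every $R$'' is precisely the condition $K(T(S)) = \emptyset$, which collapses the version space and maximizes $f$ automatically; so the adversary's best strategy is necessarily a labeling that keeps $K(T(S))$ non-empty, which puts us back in the consistent case covered by the definition of $\mathrm{OPT}_n$.
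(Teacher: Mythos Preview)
Your proof is correct and follows essentially the same approach as the paper's: run the optimal algorithm $A^*$ against the labeling $T$, and split into cases according to whether the resulting experiment sequence is consistent with some reward $R$ (in which case the guarantee defining $\mathrm{OPT}_n$ applies) or not (in which case $K(T(S)) = \emptyset$ and $f(T(S)) = V$). The paper organizes the case split slightly differently---first asking whether \emph{any} $S$ of size at most $n$ is inconsistent, rather than the specific $S$ produced by $A^*$---but the content is the same, and your version is arguably a touch cleaner.
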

\begin{proof}
Fix any $T \in \mathcal{T}$. Consider two cases. First suppose that there exists some $S \subset \mathcal{U}$ such that $|S| \leq n$, but $T(S)$ is inconsistent with the labeling of any $R$. By defintion of $f$, $f(T(S)) = V \geq \mathrm{OPT}_n$, and since $|S| \leq n$, the Lemma is proven. 

Otherwise, it must be that all $S \subset \mathcal{U}$, $|S| \leq n$, $T(S)$ is consistent with the labeling of some $R$. By definition of $\mathrm{OPT}_n$, running $A^*$ against the labels provided by $T$ is guaranteed to result in a sequence of environments $S^*$, $|S^*| \leq n$, satisfying $f(T(S^*)) \geq \mathrm{OPT_n}$. $S^*$ is a witness that $\min_{S \subset \mathcal{U} : f(T(S)) \geq \mathrm{OPT}_n} |S|$ is at most $n$.  
\end{proof}

Given an environment $E$ and true reward $R$, let $O(E,R)$ denote the set of possible observations (in either the policy or trajectory setting). 

\begin{lem}\label{lem2}
For any $\mathcal{E}$, such that $f(\mathcal{E}) \leq \mathrm{OPT}_n$, there exists an environment $E$ such that:

$$\min_{R \in K(\mathcal{E})} \min_{o \in O(E,R)} f(\mathcal{E} + (E,o)) - f(\mathcal{E}) \geq (\mathrm{OPT}_n - f(\mathcal{E}))/\mathrm{GIC}_{\mathrm{OPT}_n}$$
\end{lem}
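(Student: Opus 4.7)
The plan is to prove the lemma by contradiction, exploiting the submodularity of $f$ together with the definition of $\mathrm{GIC}$. Suppose the conclusion fails. Then for every environment $E \in \mathcal{U}$ there is some ``adversarial'' $R_E \in K(\mathcal{E})$ and observation $o_E \in O(E,R_E)$ whose marginal gain
$$f(\mathcal{E} + (E,o_E)) - f(\mathcal{E}) < (\mathrm{OPT}_n - f(\mathcal{E}))/\mathrm{GIC}_{\mathrm{OPT}_n}.$$
The idea is to stitch these adversarial choices into a single labeling $T' \in \mathcal{T}$ that fools any selector of environments, and then derive an impossibility using the definition of $\mathrm{GIC}_{\mathrm{OPT}_n}$.

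Concretely, the first step is to define $T' \in \mathcal{T}$ by $T'(E) = o_E$, and observe that by construction, the marginal gain of adding any single $(E,T'(E))$ to $\mathcal{E}$ is strictly less than $(\mathrm{OPT}_n - f(\mathcal{E}))/\mathrm{GIC}_{\mathrm{OPT}_n}$. The second step is to invoke the definition of $\mathrm{GIC}_{\mathrm{OPT}_n}$: applied to this particular $T'$, it guarantees the existence of a set $S \subset \mathcal{U}$ with $|S| \leq \mathrm{GIC}_{\mathrm{OPT}_n}$ such that $f(T'(S)) \geq \mathrm{OPT}_n$. By monotonicity of $f$, we also have $f(T'(S) \cup \mathcal{E}) \geq \mathrm{OPT}_n$.

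The third step is the telescoping argument. Enumerate $S = \{E_1,\ldots,E_k\}$ and write $f(T'(S) \cup \mathcal{E}) - f(\mathcal{E})$ as the sum of marginal gains from adding $(E_j, T'(E_j))$ one at a time, on top of $\mathcal{E}$ augmented with the previously added pairs. Submodularity (Lemma 1) lets us upper bound each such marginal gain by the corresponding marginal gain on top of the smaller set $\mathcal{E}$ alone, which by the contradiction hypothesis is strictly less than $(\mathrm{OPT}_n - f(\mathcal{E}))/\mathrm{GIC}_{\mathrm{OPT}_n}$. Summing over the at most $\mathrm{GIC}_{\mathrm{OPT}_n}$ terms yields
$$f(T'(S) \cup \mathcal{E}) - f(\mathcal{E}) < \mathrm{OPT}_n - f(\mathcal{E}),$$
contradicting $f(T'(S) \cup \mathcal{E}) \geq \mathrm{OPT}_n$.

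The main subtlety I anticipate is conceptual rather than computational: one must be careful about which quantifiers the adversary controls. The $R_E$ and $o_E$ selected for different environments $E$ need not be consistent with a common reward function, but this is harmless because $T'$ is an element of $\mathcal{T}$ (arbitrary labelings, not necessarily induced by a reward), and the definition of $\mathrm{GIC}_{\alpha}$ is taken over all of $\mathcal{T}$. Provided this point is made explicit, the rest is the standard submodular-cover style bookkeeping à la Guillory--Bilmes, and no intricate calculation is required.
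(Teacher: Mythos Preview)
Your proposal is correct and follows essentially the same route as the paper's proof: assume the contrary, assemble the adversarial observations into a labeling $T' \in \mathcal{T}$, invoke the definition of $\mathrm{GIC}_{\mathrm{OPT}_n}$ to obtain a small witnessing set $S$, and then use monotonicity together with the submodular telescoping bound to derive the contradiction $f(T'(S)\cup\mathcal{E}) < \mathrm{OPT}_n$. Your explicit remark that $T'$ need not be consistent with a single reward (because $\mathcal{T}$ consists of arbitrary labelings) is a helpful clarification that the paper leaves implicit.
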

\begin{proof}
Suppose not. Then for every environment $E$, there exists some $R \in K(\mathcal{E})$ and $o \in O(E,R)$ such that:
$$f(\mathcal{E} + (E,o)) - f(\mathcal{E}) < (\mathrm{OPT}_n - f(\mathcal{E}))/\mathrm{GIC}_{\mathrm{OPT}_n}$$

Now let $T' \in \mathcal{T}$ be defined so that:

\begin{align}
T'(E) &\triangleq \arg\min_{R, o \in O(E,R)} f(\mathcal{E} + (E,o)) - f(\mathcal{E}) \\ 
      &< (\mathrm{OPT}_n - f(\mathcal{E}))/\mathrm{GIC}_{\mathrm{OPT}_n}\label{eqn:x}
\end{align}

By definition of $\mathrm{GIC}$, we have that:

$$ \min_{S \subset \mathcal{U} : f(T'(S)) \geq \mathrm{OPT}_n} |S| \leq \mathrm{GIC}_{\mathrm{OPT}_n}$$

So there exists a set of environments $S$, with $|S| \leq \mathrm{GIC}_{\mathrm{OPT}_n}$, such that $f(T'(S)) \geq \mathrm{OPT}_n$, and by monotonicity of $f$, we know that $f(T'(S) \cup \mathcal{E}) \geq \mathrm{OPT}_n$. Let $\gamma = |S|/\mathrm{GIC}_{\mathrm{OPT}_n}$.

However, despite $f(T'(S) \cup \mathcal{E}) \geq \mathrm{OPT}_n$, repeatedly applying the submodularity of $f$, then applying equation (\ref{eqn:x}) implies:

\begin{align*}
f(T'(S) \cup \mathcal{E}) &\leq f(\mathcal{E}) + \sum_{E \in S} (f(\mathcal{E} + (E,T'(E))) - f(\mathcal{E})\\
                          &< f(\mathcal{E}) + |S|(\mathrm{OPT}_n - f(\mathcal{E}))/\mathrm{GIC}_{\mathrm{OPT}_n}\\
                          &= f(\mathcal{E}) + \gamma(\mathrm{OPT}_n - f(\mathcal{E})) \\
                          &= (1-\gamma)f(\mathcal{E}) + \gamma \mathrm{OPT}_n \leq \mathrm{OPT}_n
\end{align*}
This establishes a contradiction.
\end{proof}

We can now prove the main theorem:
\begin{thm}
$\mathcal{E}$ returned by the Greedy Environment Selection algorithm satisfies $f(\mathcal{E}) \geq \mathrm{OPT}_n - \epsilon$ when $B = n \ln(\mathrm{OPT_n/\epsilon})$. 
\end{thm}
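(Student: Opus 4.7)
The plan is to use the two lemmas proved earlier in the appendix (the bound $\mathrm{GIC}_{\mathrm{OPT}_n} \leq n$ and the one-step progress lemma) together with the standard submodular-greedy geometric-decay argument. Let $\mathcal{E}_i$ denote the sequence of experiments held by the algorithm after $i$ rounds, with $\mathcal{E}_0 = \emptyset$, and let $g_i = \mathrm{OPT}_n - f(\mathcal{E}_i)$ be the remaining gap to the optimum. If $g_i \leq 0$ we are already done, so assume throughout that $f(\mathcal{E}_i) \leq \mathrm{OPT}_n$.

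First I would observe that the inner $\min_{R \in K(\mathcal{E})} \min_{\pi \in \mathrm{OPT}(E,R)}$ in Line~\ref{line:max} is exactly the worst-case improvement appearing in Lemma~\ref{lem2}, since the true reward lies in $K(\mathcal{E})$ and the observed $\pi^i$ is one of the possible observations consistent with that reward. Hence the actual gain of round $i+1$ is at least the quantity that Lemma~\ref{lem2} lower-bounds, so
\begin{equation*}
f(\mathcal{E}_{i+1}) - f(\mathcal{E}_i) \;\geq\; \frac{\mathrm{OPT}_n - f(\mathcal{E}_i)}{\mathrm{GIC}_{\mathrm{OPT}_n}} \;=\; \frac{g_i}{\mathrm{GIC}_{\mathrm{OPT}_n}}.
\end{equation*}
Rearranging gives the recurrence
\begin{equation*}
g_{i+1} \;\leq\; \left(1 - \tfrac{1}{\mathrm{GIC}_{\mathrm{OPT}_n}}\right) g_i.
\end{equation*}

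Next I would apply Lemma~\ref{lem1}, which gives $\mathrm{GIC}_{\mathrm{OPT}_n} \leq n$, so $1/\mathrm{GIC}_{\mathrm{OPT}_n} \geq 1/n$ and therefore $g_{i+1} \leq (1 - 1/n)\, g_i$. Unrolling the recurrence for $B$ rounds and using $1 - x \leq e^{-x}$ yields
\begin{equation*}
g_B \;\leq\; \left(1 - \tfrac{1}{n}\right)^{B} g_0 \;\leq\; e^{-B/n}\, \mathrm{OPT}_n.
\end{equation*}
Setting $B = n \ln(\mathrm{OPT}_n/\epsilon)$ makes the right-hand side at most $\epsilon$, so $\mathrm{OPT}_n - f(\mathcal{E}_B) \leq \epsilon$, which is the desired inequality. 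The bound $\mathrm{OPT}_n \leq V$ then gives the weaker but more usable form $B \leq n\ln(V/\epsilon)$ stated in the body of the paper.

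The only subtle step — and the one I would treat most carefully — is the first one: linking the adversarial inner minimization inside Line~\ref{line:max} of the algorithm to the quantity bounded by Lemma~\ref{lem2}. One must argue that ranging over all $(R, \pi)$ with $R \in K(\mathcal{E})$ and $\pi \in \mathrm{OPT}(E,R)$ is the same as ranging over all observations $o \in O(E,R)$ for $R \in K(\mathcal{E})$ that appear in Lemma~\ref{lem2}, and that the observation actually received in round $i+1$ is one such $o$, so monotonicity of $f$ pushes the realized gain above the worst-case guarantee. Once that is in place, the rest is the textbook submodular geometric-decay computation outlined above, and the trajectory version is identical after replacing $\mathrm{OPT}(E,R)$ with the corresponding set of trajectories, exactly as noted in the footnote to the algorithm.
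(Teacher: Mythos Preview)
Your proposal is correct and follows essentially the same route as the paper: invoke the one-step progress lemma (Lemma~\ref{lem2}) together with the greedy selection rule to get $g_{i+1} \leq (1 - 1/\mathrm{GIC}_{\mathrm{OPT}_n})\,g_i$, unroll using $1-x \leq e^{-x}$, and then substitute $\mathrm{GIC}_{\mathrm{OPT}_n} \leq n$ from Lemma~\ref{lem1} and $B = n\ln(\mathrm{OPT}_n/\epsilon)$. The only cosmetic difference is that the paper keeps $\mathrm{GIC}_{\mathrm{OPT}_n}$ through the unrolling and plugs in the bound $\leq n$ at the very end, whereas you substitute it one step earlier; your added discussion of why the realized observation dominates the inner minimization in Line~\ref{line:max} is a point the paper leaves implicit.
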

\begin{proof}
Let $\mathcal{E}_i$ denote the subsequence consisting of the first $i$ environment, observation pairs. If for some $i \leq B$, $f(\mathcal{E}_i) \geq \mathrm{OPT}_n$, then there is nothing to prove. Otherwise, by applying Lemma \ref{lem1}, and the definition of the algorithm, we know that:

$$f(\mathcal{E}_i) - f(\mathcal{E}_{i-1}) \geq (\mathrm{OPT}_n - f(\mathcal{E}_{i-1}))/\mathrm{GIC}_{\mathrm{OPT}_n}$$
 which implies:
$$\mathrm{OPT}_n - f(\mathcal{E}_i) \leq (\mathrm{OPT}_n - f(\mathcal{E}_{i-1}))(1 - 1/\mathrm{GIC}_{\mathrm{OPT}_n})$$

Using the fact $1-x \leq e^{-x}$, we can conclude:

$$\mathrm{OPT}_n - f(\mathcal{E}) \leq \mathrm{OPT}_n \exp(-B/\mathrm{GIC}_{\mathrm{OPT}_n}))$$

Applying Lemma \ref{lem2} and substituting $B$ completes the proof. 

\end{proof}

\end{document}